\documentclass[journal]{IEEEtran}
\usepackage[T1]{fontenc}
\usepackage{lmodern}
\usepackage{amsmath}
\usepackage{amsfonts,amsthm}
\usepackage{algorithm}
\usepackage{array}
\usepackage[caption=false,font=normalsize,labelfont=sf,textfont=sf]{subfig}
\usepackage{textcomp}
\usepackage{url}
\usepackage{verbatim}
\usepackage{graphicx}
\usepackage{cite}
\usepackage{algpseudocode}
\usepackage{booktabs}
\usepackage{graphicx}

\usepackage{hyperref}

\usepackage{comment}
\usepackage{lmodern}
\usepackage{xcolor}
\usepackage[most]{tcolorbox}
\usepackage{makecell}
\usepackage{multirow}
\usepackage{bbm}

\definecolor{preferredblue}{RGB}{207, 226, 243}
\definecolor{rejectedpink}{RGB}{248, 215, 218}
\definecolor{outerbg}{RGB}{255, 255, 255}
\definecolor{darkblue}{rgb}{0, 0, 0.5} \hypersetup{colorlinks=true, citecolor=darkblue, linkcolor=darkblue, urlcolor=darkblue} 
\newtheorem{definition}{Definition}
\newtheorem{theorem}{Theorem}
\newtheorem{lemma}{Lemma}
\newtheorem{proposition}{Proposition}
\newtheorem{remark}{Remark}
\newtheorem{assumption}{Assumption}

\begin{document}

\title{KV Cache Compression Through the Lens of Transform Coding}

\author{Hannah Laus, Claudio Mayrink Verdun, Hao Wang, Flavio du Pin Calmon, Felix Krahmer
        % <-this % stops a space
\thanks{Hannah Laus and Felix Krahmer are with the Technical University of Darmstadt, the Technical University of Munich and the Munich Center for Machine Learning (\url{hannah.laus@tum.de})\\
Claudio Mayrink Verdun is with the Massachusetts Institute of Technology\\
Hao Wang is with the Red Hat AI Innovation Team and the MIT-IBM Watson AI Lab\\
Flavio du Pin Calmon is with the John A. Paulson School Of Engineering And Applied Sciences at Harvard University}% <-this % stops a space
\thanks{}}

\maketitle

\begin{abstract}

The key-value (KV) cache stores information from past tokens and is a major memory bottleneck in long-context inference. Existing quantization methods address this bottleneck by representing the KV cache uniformly with lower-precision data types and designing quantization schemes to minimize reconstruction error in the cache itself, without accounting for how that error propagates through attention mechanisms. We prove that, under a white-noise quantization model, the expected attention-aware distortion decomposes into additive key and value contributions that factor across tokens and channels. Building on transform coding and reverse water-filling, which are classical tools from signal processing and rate-distortion theory, we introduce Attention-Aware Transform Coding (AATC), which allocates bits over a calibration set to minimize attention-aware distortion. On \texttt{Llama-3.1-8B-Instruct} and \texttt{Qwen-2.5-7B-Instruct}, evaluated across \texttt{LongBench}, \texttt{RULER}, \texttt{GSM8K}, \texttt{MMLU-Pro}, and \texttt{MATH-500}, our method achieves near-lossless accuracy at approximately $5.8\times$ compression, whereas each baseline degrades in at least some settings.
\end{abstract}

\begin{IEEEkeywords}
KV cache compression, quantization, transform coding, reverse waterfilling, large language models
\end{IEEEkeywords}

\section{Introduction}
Signal quantization, that is, the representation of signals by bit sequences, is a key building block of digital signal processing. A central task in this context is bit allocation, the question of how many bits to use for different parts of the signal representation. This topic is well studied in the signal processing literature and in various models, one can precisely characterize how to optimally allocate the bits (see \cite{gersho2012vector} for a detailed discussion). 

Key ingredients for many methods attempting optimal bit allocation are transform coding \cite{goyal2001theoretical}  and reverse waterfilling \cite{cover1999elements}. The transform coding step whitens the signal, yielding a signal representation with components that are uncorrelated but of varying strengths. The reverse waterfilling step then distributes the bits to best represent the components under a fixed budget, allocating more bits to larger components and fewer bits to smaller components.

In recent years, an analogous problem of data representation with a fixed bit budget has arisen in machine learning in the context of inference with large language models (LLMs). At inference time, an LLM decomposes its input into tokens and then generates new tokens by sampling from next token predictions. In transformer models \cite{vaswani2017attention}, each token yields a sequence of key-value (KV) pairs at each layer of the model. These values are cached to accelerate inference when generating new tokens.  
At the same time, LLMs increasingly operate over long contexts: documents, transcripts, and multi-turn histories spanning more than a million tokens. As a consequence, the KV cache grows linearly with context length, and at long contexts it becomes the dominant memory cost of inference. 
KV cache compression has become essential for deploying transformer-based LLMs under realistic memory and latency budgets. In that context, efficient quantization plays a key role \cite{liu2024kivi, hooper2024kvquant}. Approaches based on low-rank projection \cite{chang2024palu,staniszewski2025kv} can be interpreted as identifying components with no bits allocated to them. 
A third ingredient in state-of-the-art KV cache compression that is somewhat orthogonal is based on eviction \cite{zhang2023h2o,xiao2023efficient}.

Quantization and low-rank projection are highly effective for KV cache compression, and several such methods already draw on ideas from signal quantization and transform coding \cite{staniszewski2025kv,chang2024palu,zuo2026ratequant}. We develop this connection systematically, studying KV cache compression through the lens of transform coding. 
This perspective leads us to an attention-aware distortion measure that, rather than minimizing raw reconstruction error on the cache, decomposes the impact of quantization on the output into three distinct error dimensions, which have been individually studied in previous works, but without including their interaction. The decomposition unifies several previously disconnected viewpoints: it shows that these separately-discovered effects in token eviction, quantization or low-rank projection are facets of a single distortion objective, which both motivates our method and provides a unifying perspective on existing work that we map onto the individual factors in Section~\ref{sec:implications}.

To distribute the bits in a way that leads to a small distortion, we determine a bit allocation scheme via a reverse waterfilling algorithm~\cite{cover1999elements} applied to calibration data and then apply it individually to each token.
Inspired by the fact that reverse waterfilling is designed for uncorrelated data and is therefore often combined with a transform coding step for decorrelation~\cite{goyal2001theoretical}, our method first whitens the key and value features on a calibration set, which approximately decorrelates them and transforms the data into components of varying strength, before distributing the total bit budget across the transformed channels by reverse waterfilling. We validate these findings on \texttt{Llama-3.1-8B-Instruct} and \texttt{Qwen-2.5-7B-Instruct}, across \texttt{LongBench}, \texttt{RULER}, \texttt{GSM8K}, \texttt{MMLU-Pro}, and \texttt{MATH-500} which are standard long-context quality benchmarks. 

Our main contributions are as follows.
\begin{itemize}
    \item We derive an attention-aware distortion measure that separates key and value quantization errors into token- and channel-dependent factors.
    \item We provide a unifying perspective across several existing KV cache compression methods by relating them to the attention-aware distortion measure.
    \item Based on this measure, we develop Attention-Aware Transform Coding (AATC), which combines whitening with reverse waterfilling to obtain an optimal channel-wise allocation under the attention-aware distortion.
    \item We show that AATC achieves near-lossless accuracy across all benchmarks at approximately $5.8\times$ compression, outperforming the baselines (KIVI, KVQuant and PALU), since they degrade at least on one benchmark.
\end{itemize}

\section{Related Work}
\label{sec:related-work}
\subsection{KV cache Compression}
Approaches to reducing the memory footprint of the KV cache fall into several families. Scalar quantization methods reduce per-element precision: KIVI \cite{liu2024kivi} uses per-channel asymmetric quantization for keys and per-token quantization for values; KVQuant \cite{hooper2024kvquant} employs non-uniform quantization with sensitivity-weighted codebooks and dense outlier handling.
  These methods preserve all dimensions of the cache and degrade precision uniformly or near-uniformly. Vector quantization methods such as A²ATS \cite{he2025a2ats}, CommVQ \cite{li2025commvq}, and TurboQuant \cite{zandieh2025turboquant} jointly quantize groups of coefficients via learned codebooks. 
  Low-rank and transform-based methods, such as PALU \cite{chang2024palu}, Eigen Attention \cite{saxena2024eigen} and xKV \cite{chang2025xkv} compress along the feature dimension by deleting low-energy subspaces from the cache. Transform-coding methods instead decorrelate and allocate bits without hard truncation: KVTC \cite{staniszewski2025kv} applies a global, cross-layer transform folowed by bit allocation and is the closest prior method to ours.  MiniCache \cite{liu2024minicache} alternatively merges KV pairs across layers. 
Rotation-based methods \cite{su2025rotatekv},\cite{zhou2026oscar} apply orthogonal transforms before quantization to suppress outliers.
A complementary line of work \cite{wang2025squat,zhang2026mixkvq} aims to minimize the distortion after multiplying key and query vectors, as this corresponds to the input of the attention.

Eviction and merging methods reduce the cache by combining or deleting previous tokens: H2O \cite{zhang2023h2o}, SnapKV \cite{li2024snapkv} and StreamingLLM \cite{xiao2023efficient} evict tokens based on different importance scoring techniques, which are all related to attention-derived importance.
A more recent line of eviction work \cite{guo2024attention,feng2025identify,goel2025caote,devoto2025expected} refines the importance criterion beyond accumulated attention scores by incorporating value-side information. The new eviction criteria mentioned in those works correspond to projections of our decomposition onto its token-dependent factor (Section~\ref{sec:implications}).
Our work is in the transform-coding family and proposes a bit allocation criterion derived from a general distortion analysis. We perform empirical comparisons against KIVI, PALU, and KVQuant in Section \ref{sec:experiments}. In Appendix~\ref{app:mapping-distortion} we relate many of the above works to our distortion decomposition (Theorem~\ref{thm:distortion}).

\subsection{Theoretical Analyses of KV cache Quantization}
Several recent works analyze KV cache quantization theoretically, each tied to a specific method.  Closest to our key-side analysis is QAQ \cite{dong2024qaq}, which via a short second-order expansion obtains the query-weighted key sensitivity 
and allocates per-token bits from the aggregate query norm. We instead resolve the query weighting per channel, carry a controlled remainder (Lemma \ref{lem:softmax-tail}), and add the value-side weight. 
Concurrent work \cite{calver2026runtime} likewise splits the attention-output error into key and value terms, but as a worst-case runtime bound used to fall back to exact attention rather than to allocate precision offline, so it carries no channel weights or transform. 
~\cite{su2026eoptshrinkq} applies random matrix theory, modeling the cache as a low-rank shared-context component plus a residual that is unique for every token. 
RateQuant~\cite{zuo2026ratequant} is based on applying rate-distortion theory to head-level mixed-precision allocation.
Finally, Zhou et al. ~\cite{zhou2026oscar} prove that attention-aware rotations are optimal under a frozen residual model, with the per-channel bit allocation held uniform rather than optimized. We further relate the last two papers to our distortion decomposition in Section~\ref{sec:implications} and Appendix \ref{app:mapping-distortion}.

\subsection{Signal Quantization}
On the signal-processing side, our method instantiates the classical transform-coding architecture, first decorrelate, then allocate, introduced by Huang and Schultheiss \cite{huang1963block}, whose components we review in Section \ref{subsec:transform-coding}. Closest in spirit is perceptual coding \cite{jayant1993signal}, which adapts the distortion measure to the receiver of the signal, there the human perceptual system. We adapt it to the transformer's read-out. Here attention plays the role of the perceptual model, with the induced weighted metric derived in Theorem \ref{thm:distortion} rather than posited. Learned nonlinear transform coding \cite{balle2020nonlinear} continues the same blueprint with trained nonlinear transforms. In contrast, ours is fixed by calibration statistics, so the transform is exactly invertible and any loss is confined to the allocation step.

\section{Preliminaries}\label{sec:preliminaries}
We briefly review transform coding, attention and KV cache quantization. 
\subsection{Transform Coding}
\label{subsec:transform-coding}
Let $\mathbf{x} \in \mathbb{R}^{d}$ be a zero-mean source with
covariance $\boldsymbol{\Sigma} \in \mathbb{R}^d \times \mathbb{R}^d$, and let $\hat{\mathbf{x}}$ be
its reconstruction after lossy compression. We measure fidelity
by the average squared-error distortion per dimension \cite{gersho2012vector},
\begin{equation}\label{eq:avg-distortion}
    D
    = \frac{1}{d}\,
      \mathbb{E}\!\left[
        \lVert \mathbf{x} - \hat{\mathbf{x}} \rVert_2^2
      \right]
    = \frac{1}{d} \sum_{i=1}^{d} D_i,
    \qquad
    D_i = \mathbb{E}\!\left[(x_i - \hat{x}_i)^2\right].
\end{equation}
When each coordinate is quantized separately, the design
problem is thus to distribute a bit budget across coordinates.
 
\emph{Transform coding} \cite{goyal2001theoretical}, the
framework behind classical media codecs such as JPEG
\cite{wallace1991jpeg}, solves this problem in a transformed
coordinate system: an orthonormal transform
$\mathbf{y} = \mathbf{U}^{\top}\mathbf{x}$ decorrelates the
source and concentrates its variance in a few leading
coefficients (\emph{energy compaction}); the coefficients are
quantized with independent scalar quantizers, and the signal is
reconstructed as $\hat{\mathbf{x}} = \mathbf{U}\hat{\mathbf{y}}$.
Orthonormal transforms preserve the MSE, so
\eqref{eq:avg-distortion} applies unchanged to the coefficients
$y_i$, whose variances $\sigma_i^2$ now differ widely:
high-variance coefficients deserve many bits, low-variance ones
few or none. For Gaussian sources, the optimal transform is the
Karhunen--Lo\`{e}ve transform, the eigenbasis of
$\boldsymbol{\Sigma}$. In practice one uses a whitening transform estimated from data.
 
To allocate the bits, note that under high-rate assumptions a
scalar quantizer with $b_i$ bits achieves
$D_i \approx c\, \sigma_i^2\, 2^{-2b_i}$ for a
quantizer-dependent constant $c$ \cite{gersho2012vector}.
Minimizing \eqref{eq:avg-distortion} subject to
$\frac{1}{d}\sum_i b_i = \bar{b}$ and $b_i \ge 0$ yields the
classical allocation of Huang and Schultheiss \cite{huang1963block}, which coincides with the reverse waterfilling solution for a source of independent Gaussian components \cite[Chapter~10]{cover1999elements}:
\begin{align}
b_i^{\star}
    = 
\begin{cases}
     \frac{1}{2}\log_2
      \frac{\sigma_i^2}
           {\lambda},
   &  \text{ if } \sigma_i^2 > \lambda\\
    0 & \text{ otherwise,}
    \label{eq:bit-allocation}
\end{cases}
\end{align}
where the water level $\lambda$ is chosen so that $\frac{1}{d}\sum_i b_i^{\star} = \bar{b}$.
Every coefficient which is retained is quantized to the same distortion $c \lambda$, while coefficients whose variance falls below the water level receive no bits and are reconstructed as zero. 

Both the transform and the allocation
depend on source statistics that are typically unknown; in
practice, they are estimated from a calibration sample, as we do
in this work.
\subsection{Attention and the KV cache}\label{subsec:attention}
A decoder-only transformer \cite{radford2018improving}, the architecture behind modern large language models, takes a sequence of tokens and predicts the next token. Internally, each token is represented as a vector and the transformer applies $L$ layers in sequence, to map the current token vectors to new ones. Within a layer, the token vectors are first mixed across the sequence and then passed through a per-token nonlinear-map. The mixing step, called self-attention, is the only operation that introduces interdependence between the tokens.

In each layer the input is linearly projected into three sequences of vectors, the queries, keys and values. Intuitively, a token's query states what information it is looking for, its key states what it offers and its value is the content it contributes. The self-attention mechanism maps each query
to a data-dependent weighted combination across tokens. Unlike a fixed linear filter, the combination weights are computed from the input itself. More precisely, for every query, self-attention forms a scaled dot-product similarity with all keys, passes these scores through a softmax to obtain nonnegative weights that sum to one, and returns the corresponding weighted average of the values. 

To generate text, the model predicts the next token, appends it to the sequence and repeats. The sequence is the given context followed by the tokens generated so far, so its length $T$ grows by one at every step and with it the set of keys and values that must remain available. To avoid recomputing them, the keys and values of all past tokens are stored in the KV cache, whose size therefore grows with $T$.

To represent this in mathematical notation,
let $T$ be the number of tokens and $L$ the number of layers. The input to layer $\ell$ is $\mathbf{X}^{(\ell)} \in \mathbb{R}^{T \times d_{\mathrm{model}}}$ whose $t$-th row $x_t^{(\ell)} \in \mathbb{R}^{d_{\mathrm{model}}}$ is the hidden representation of token $t$ at layer $\ell$. Suppressing $\ell$ the query, key and value sequences introduced above are  $\mathbf{Q} = \mathbf{X}\mathbf{W}_Q$, $\mathbf{K} = \mathbf{X}\mathbf{W}_K$, $\mathbf{V} = \mathbf{X}\mathbf{W}_V$ with  $\mathbf{W}_Q^{(\ell)}, \mathbf{W}_K^{(\ell)} \in \mathbb{R}^{d_{\mathrm{model}} \times d_k}$ and $\mathbf{W}_V^{(\ell)} \in \mathbb{R}^{d_{\mathrm{model}} \times d_v}$ (typically $d_k = d_v = d_{\mathrm{model}}$) and rows $q_t=x_t\mathbf{W}_Q , k_t, v_t$ .
\begin{definition}[Scaled Dot-Product Attention \cite{vaswani2017attention}]\label{def:sdpa}
Given $\mathbf{Q}, \mathbf{K} \in \mathbb{R}^{T \times d_k}$, $\mathbf{V} \in \mathbb{R}^{T \times d_v}$, and a mask $\mathbf{M}\in (\mathbb{R} \cup \{-\infty\})^{T \times T}$,
\begin{align*}
\mathsf{Attention}(\mathbf{Q}, \mathbf{K}, \mathbf{V}) &:= \mathbf{A} \mathbf{V} \mathbf{W}_O, \; \mathbf{A} = \mathsf{softmax}\!\left(\tfrac{\mathbf{Q} \mathbf{K}^\top}{\sqrt{d_k}} + \mathbf{M}\right), \end{align*}
with row-wise softmax weights
\begin{align*}
 a_{ti} &= \frac{\exp(s_{ti})}{\sum_{j} \exp(s_{tj})}, \quad s_{ti} = \frac{q_t^\top k_i}{\sqrt{d_k}} + M_{ti},
\end{align*}
so $a_{ti} \geq 0$ and $\sum_i a_{ti} = 1$.
Throughout the remainder of the paper, we consider the following causal mask: $M_{ti} = 0$ for $i \leq t$ and $-\infty$ otherwise. Further, we consider mostly a single decode-step attention, which corresponds to the current row of attention $\mathsf{Attention}(\mathbf{Q}, \mathbf{K}, \mathbf{V})$ as introduced above, i.e.,
\begin{align}
\label{eq:singleheadattoutput}
o_t = \sum_{i=1}^{t} a_{ti} \, v_i \, \mathbf{W}_O.
\end{align}
\end{definition}
\begin{remark}
Modern LLMs adopt different variants of attention mechanisms (e.g., group query attention \cite{ainslie2023gqa}). Our method, described later, naturally extends to them (see Section \ref{sec:method-channel-allocation}), but for now we will report everything as defined above.
\end{remark}
LLM inference has two phases. In the \emph{prefill} phase, a prompt of $T$ tokens is processed in parallel and for each layer $\ell$, the keys $\mathbf{K}^{(\ell)} \in \mathbb{R}^{T \times d_k}$ and values $\mathbf{V}^{(\ell)} \in \mathbb{R}^{T \times d_v}$ are stored in the cache. In the \emph{decode} phase, each step $t = T+1, T+2, \dots$ processes only the new token, appends its key $k_t$ and value $v_t$, and computes attention for $q_t$ against all $T$ cached keys and values. The KV cache at step $t$ is
\begin{equation*}
\mathcal{C}_t = \bigl\{ (\mathbf{K}^{(\ell)}_{\leq t}, \mathbf{V}^{(\ell)}_{\leq t}) \mid \ell \in [L] \bigr\},
\end{equation*}
where $\mathbf{K}^{(\ell)}_{\leq t} \in \mathbb{R}^{t \times d_k}$ has rows $k_1, \dots, k_t$ and its columns are called channels; analogously for $\mathbf{V}^{(\ell)}_{\leq t}$. 

\subsection{Quantization of the KV cache}\label{subsec:quantization}
\begin{definition}[Uniform Scalar Quantization (following \cite{nagel2021white})]\label{def:uniform-quant}
A $b$-bit uniform scalar quantizer with scale $\Delta > 0$ and zero-point $z \in \mathbb{R}$ maps a scalar $x \in \mathbb{R}$ to a quantized integer and back via
\begin{align*}
\mathsf{qtz}(x) &= \mathsf{clamp}\!\left(\left\lfloor \frac{x - z}{\Delta} + \frac{1}{2} \right\rfloor, 0, 2^b - 1\right), \\
\mathsf{dqtz}(c) &= \Delta \cdot c + z, 
\end{align*}
where $\lfloor \cdot \rfloor$ denotes the floor function and $\mathsf{clamp}(x, a, b) = \min(\max(x, a), b)$ clips the range to $[a,b]$. The reconstruction $\hat{x} = \mathsf{dqtz}(\mathsf{qtz}(x))$ satisfies $|x - \hat{x}| \leq \Delta/2$ for values within the clipping range.
\end{definition}
A quantized cache replaces each $k_i, v_i$ by $\hat{k}_i = \mathsf{dqtz}(\mathsf{qtz}(k_i))$, $\hat{v}_i = \mathsf{dqtz}(\mathsf{qtz}(v_i))$, with error vectors $\delta k_i =\hat{k}_i - k_i $, $\delta v_i =\hat{v}_i - v_i $ The quantized output is $\hat{o}_t = \mathsf{Attention}(\tilde{q}_t, \hat{\mathbf{K}}_{\leq t}, \hat{\mathbf{V}}_{\leq t}) \mathbf{W}_O$. The pair $(\Delta, z)$ may be computed at different \emph{granularities}: per-tensor (one pair per matrix); per-token/row-wise (one pair per $k_i$); per-channel/column-wise (one pair per dimension $c$, adapting to each channel's range across tokens); or group quantization (one pair per contiguous group of $G$ elements along a chosen axis).
Transform-coding approaches such as KVTC~\cite{staniszewski2025kv} choose the quantizer to minimize the distortions $ \mathbb{E} \|k_i - \hat k_i\|^2$ and $ \mathbb{E} \|v_i - \hat v_i\|^2$ which only optimize over the reconstruction error of the keys and values, but not the reconstruction error of the attention output. Therefore, these are only proxies for
the quantity of interest, a gap that
\cite{staniszewski2025kv} themselves acknowledge. The quantity of interest is the output distortion defined as follows.
\begin{definition}[Attention output distortion]
\label{def:attention-output-distortion}
In line with the discussion in Section \ref{subsec:transform-coding},
the expected squared-error distortion between the quantized attention output $\hat o_t$ and $o_t$, the attention output for token $t$, is defined as
    \[
    D = \mathbb{E} \|o_t - \hat o_t \|^2_2 ,
    \]
    averaged over the empirical distribution of keys and values.
\end{definition} 
Three effects drive the gap between these two distortion measures. First, the keys influence the attention output only through their alignment with the query. Second, a token with almost no attention weight $a_i$ can be quantized coarsely for both its keys and values without increasing the output distortion. Third, value errors enter the attention output linearly and are reshaped by the output projection. 
Theorem~\ref{thm:distortion}
characterizes this relationship explicitly, and Section~\ref{sec:implications}
shows how the resulting decomposition subsumes several asymmetries that prior methods exploit only in isolation.

\section{Attention-Aware Distortion Formula for KV cache compression}\label{sec:method}
In this section, we decompose the difference between the outputs of the attention mechanism computed using the quantized and original KV caches. 
Theorem~\ref{thm:distortion} decomposes the distortion, defined in Definition \ref{def:attention-output-distortion}, into key- and
value-induced contributions in which these factors appear as explicit
weights, yielding the distortion measure our bit allocation minimizes.
\subsection{Derivation of the Distortion Formula}
To make the distortion analytically tractable, we model the quantization noise under the
standard additive white-noise hypothesis stated below and we assume the key quantization error to be bounded.

\begin{assumption}[White-noise quantization model]\label{ass:white-noise}
The quantization errors $\delta\textnormal{k}_{i c}=\hat{ \textnormal{k}}_{i c} - k_{i c}$, $\delta\textnormal{v}_{i c}= \hat{\textnormal{v}}_{i c}- v_{i c}$ ($k_{i c}$ is the $c$-th element of the vector  $k_{i}$, analogue for the values) are zero-mean, mutually independent
across tokens $i$ and channels $c$, independent between keys and values, and symmetrically
distributed. In particular, all odd moments vanish,
$\mathbb{E}[\delta\textnormal{k}_{i c}^{\,3}] = \mathbb{E}[\delta\textnormal{v}_{i c}^{\,3}] = 0$.
\end{assumption}
\begin{assumption}[Bounded key quantization error]\label{ass:bounded}
There exists a constant $\kappa \ge 1$ such that, almost surely,
\begin{equation*}
|\delta\textnormal{k}_{i c}| \;\le\; \kappa\,\sigma_K
\qquad \text{for all tokens } i \text{ and channels } c .
\end{equation*}
\end{assumption}

\begin{remark}\label{rem:bounded} 
The additive white-noise model of quantization (Assumption \ref{ass:white-noise}) is the standard idealization underlying rate-distortion analyses of quantization
\cite{bennett1948spectra, widrow1996statistical, gray1998quantization}, even though it is known to be exact only in the high-rate limit, we demonstrate in our experiments that the distortion metric remains predictive at $2-4$ bits.
We adopt it in this paper, as
for the uniform symmetric quantizer used in Section \ref{sec:method-channel-allocation}, the
zero-mean and symmetry properties are in exact analogy to the transform coding literature. Key and value independence follow similarly, since they arise from distinct projections. Cross-channel independence is justified along similar lines when the channels are decorrelated before quantization. 
In contrast, cross-token independence is an idealization: neighboring KV
representations are correlated, so per-token errors are not strictly independent. Exploiting this dependence, however, would require a computationally intractable token varying bit allocation.
For that reason, almost all quantization methods rely on an idealization along these lines. Accordingly, we adopt the standard independence approximation used in transform coding and most existing KV cache quantization methods.

Assumption \ref{ass:bounded} corresponds to boundedness of the token entries as the max-based token scaling (Section~\ref{sec:method-channel-allocation}) is non-overloading. Namely, the quantization step scales with the maximum entry and each key error is at most half a quantization step (Definition \ref{def:uniform-quant}).
\end{remark}

The following theorem shows that under this model, the distortion, defined in \ref{def:attention-output-distortion}, decomposes into a key term
and a value term, each factoring across tokens and channels.
\begin{theorem}[Attention Output Distortion]\label{thm:distortion} Consider the single decode-step attention model of \eqref{eq:singleheadattoutput} and assume that its quantized keys $\hat{\textnormal{k}}_i = k_i+\delta \textnormal{k}_i$ and its quantized values $\hat{\textnormal{v}}_i = v_i + \delta\textnormal{v}_i$ satisfy the white-noise hypothesis (Assumption \ref{ass:white-noise}) and assume that $\delta\textnormal{k}_i$ is bounded (Assumption \ref{ass:bounded}). Then the expected squared distortion of the attention output $D$ separates
into additive key and value contributions,
\begin{equation}\label{eq:distortion-decomposition}
    D \;=\; \mathbb{E}\bigl[\lVert o - \hat{\textnormal{o}} \rVert^2\bigr]
    \;=\; D_K + D_V + R,
\end{equation}
\begin{align*}
    D_K &= \sum_{i,c} a_i^2 \,\lVert v_i \mathbf{W}_O - o \rVert^2 \, q_c^2 \,\mathbb{E}\bigl[(\delta\textnormal{k}_{i c})^2\bigr],\\
    D_V &= \sum_{i,c} a_i^2 \,\mathbb{E}\bigl[(\delta\textnormal{v}_{i c})^2\bigr]\, \lVert \mathbf{W}_{O_c} \rVert_F^2,
\end{align*}
each factoring across tokens $i$ and channels $c$, dropping the token index $t$ in $o_t$ for readability. The remainder is higher order in $\sigma_K$ and $\sigma_V$,
\begin{align*}
|R| &\le C\,\|q\|^2\|\mathbf{W}_O\|_F^2\,\sigma_K^2\sigma_V^2
+ C\,\bigl(A_0^3\,\|\mathbf{W}_O\|_F^2\,\sigma_V^2 + D_o\, A_0^4\bigr),
\end{align*}
where $A_0 = \kappa\|q\|_1\sigma_K$, $D_o := \max_i\|v_i\mathbf{W}_O - o\|^2$ and $C$ is absolute constant. $R$ is negligible whenever the
effective noise scales $A_0$ and $\|\mathbf{W}_O\|_F^2\sigma_V^2$ are small.
\end{theorem}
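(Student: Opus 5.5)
We would begin by splitting the output error into a key-induced part and a value-induced part. Write $\hat a_i$ for the perturbed softmax weights computed from $\hat{\textnormal{k}}_i$ and set $\delta a_i = \hat a_i - a_i$. Since $\hat{\textnormal{o}} = \sum_i \hat a_i (v_i+\delta\textnormal{v}_i)\mathbf{W}_O$, we have
\begin{equation*}
\hat{\textnormal{o}} - o = \sum_i \delta a_i\, v_i \mathbf{W}_O + \sum_i a_i\, \delta\textnormal{v}_i \mathbf{W}_O + \sum_i \delta a_i\, \delta\textnormal{v}_i \mathbf{W}_O =: E_K + E_V + E_{KV}.
\end{equation*}
Because $\sum_i \hat a_i = \sum_i a_i = 1$, we have $\sum_i \delta a_i = 0$. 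This lets us rewrite $E_K = \sum_i \delta a_i (v_i\mathbf{W}_O - o)$, which is where the centering $v_i\mathbf{W}_O - o$ in $D_K$ comes from.

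Next we expand $\mathbb{E}\|E_K+E_V+E_{KV}\|^2$ using Assumption~\ref{ass:white-noise}. The value noise $\delta\textnormal{v}$ is zero mean and independent of $\delta\textnormal{k}$, and hence of $\delta a$. Therefore the cross terms $\mathbb{E}\langle E_K, E_V\rangle$ and $\mathbb{E}\langle E_K, E_{KV}\rangle$ vanish.

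For $\mathbb{E}\|E_V\|^2$, independence across tokens and channels kills the off-diagonal terms and leaves $\sum_{i,c} a_i^2\,\mathbb{E}[\delta\textnormal{v}_{ic}^2]\,\|\mathbf{W}_{O_c}\|^2$, which is exactly $D_V$ with $\mathbf{W}_{O_c}$ the $c$-th row.

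The remaining terms $\mathbb{E}\|E_{KV}\|^2 + 2\mathbb{E}\langle E_V,E_{KV}\rangle$ are handled the same way after conditioning on $\delta\textnormal{k}$. They become $\sum_{i,c}\mathbb{E}[(\delta a_i^2 + 2a_i\delta a_i)\,\delta\textnormal{v}_{ic}^2]\,\|\mathbf{W}_{O_c}\|^2$, with $\mathbb{E}[\delta a_i]$ and $\mathbb{E}[\delta a_i^2]$ to be controlled in the next step. These go into $R$.

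For the key part we linearize the softmax. The score perturbation is $\epsilon_i = q^\top\delta\textnormal{k}_i/\sqrt{d_k}$. The softmax Jacobian gives
\begin{equation*}
\delta a_i = a_i\Bigl(\epsilon_i - \sum_j a_j\epsilon_j\Bigr) + \rho_i,
\end{equation*}
where $\rho_i$ is a second-order Taylor remainder. This is the content of Lemma~\ref{lem:softmax-tail}, which we would invoke or reprove. By Assumption~\ref{ass:bounded}, $|\epsilon_i| \le \kappa\|q\|_1\sigma_K = A_0$ up to the $\sqrt{d_k}$ normalization, which we absorb. Hence $|\rho_i| \le C a_i A_0^2$ uniformly, with $e^{2A_0}$-type factors absorbed for $A_0\le 1$.

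Plugging the linear term into $E_K$, the $\sum_j a_j\epsilon_j$ part drops because it multiplies $\sum_i a_i(v_i\mathbf{W}_O - o) = 0$. So the linear part of $E_K$ is $\sum_i a_i\epsilon_i(v_i\mathbf{W}_O - o)$. Its second moment, again by independence across $i$ and $c$, equals $\sum_{i,c} a_i^2 q_c^2\,\mathbb{E}[\delta\textnormal{k}_{ic}^2]\,\|v_i\mathbf{W}_O - o\|^2$, which is $D_K$ (with the $1/d_k$ scale absorbed into $q$).

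The remaining key terms are the cross term between the linear part and $\rho$, and $\|\rho\|^2$. They have the following sizes.
\begin{itemize}
\item The cross term is third order in $\epsilon$. We bound it using the vanishing odd moments together with the bound $D_o$, giving a contribution $\lesssim D_o A_0^4$, or $D_o A_0^3$ at worst.
\item The term $\|\rho\|^2 \lesssim D_o A_0^4$ directly.
\end{itemize}

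Finally we bound the mixed terms deferred above. We have $|\delta a_i| \lesssim a_i A_0$ and $|\mathbb{E}\,\delta a_i| \lesssim a_i A_0^2$, the latter because the linear term has zero mean. This gives bounds of order $\|\mathbf{W}_O\|_F^2\sigma_V^2 A_0^2$, or $\|q\|^2\sigma_K^2\sigma_V^2\|\mathbf{W}_O\|_F^2$ when written via the second moment of $\epsilon$, plus the $A_0^3$ term. Collecting everything yields the stated bound on $|R|$.

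The main obstacle is making the softmax remainder uniform. We need $|\rho_i|\le C a_i A_0^2$ with a relative factor $a_i$, not just an absolute bound, so that sums over $i$ stay controlled. We also need the third-order cross term to produce the claimed powers without picking up dependence on $T$. I would first try the exact identity $\hat a_i = a_i e^{\epsilon_i}/\sum_j a_j e^{\epsilon_j}$ and expand both numerator and denominator with Lagrange remainders, using $|\epsilon_j|\le A_0$.
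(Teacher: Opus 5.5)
Your decomposition into $E_K+E_V+E_{KV}$ matches the paper. So do the exact vanishing of $\mathbb{E}\langle E_K,E_V\rangle$ and $\mathbb{E}\langle E_K,E_{KV}\rangle$, the recentering via $\sum_i\delta a_i=0$, and your computations of $D_V$ and of the leading term $D_K$. The gap is in the key-side remainder. Because you stop the softmax expansion at first order, $\rho_i$ still contains the full quadratic term. So, with $u_i:=v_i\mathbf{W}_O-o$, the cross term $2\,\mathbb{E}\langle \sum_i a_i\epsilon_i u_i,\sum_i\rho_i u_i\rangle$ is only $O(D_oA_0^3)$ by Cauchy--Schwarz; this is the ``at worst'' case you mention. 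That does not prove the theorem. The only $\sigma_V$-free term in the stated bound is $D_oA_0^4$, so the key remainder is claimed to be smaller than $D_K=O(D_oA_0^2)$ by a factor $A_0^2$, not $A_0$. Nor can vanishing odd moments be applied to $\rho_i$ directly, since it is not a polynomial in $\epsilon$. The missing step is the second-order expansion $\delta a_i=\delta a_i^{(1)}+\delta a_i^{(2)}+r_i$. It has the explicit quadratic term $\delta a_i^{(2)}=\tfrac12a_i\bigl[(\epsilon_i-\bar\epsilon)^2-\sum_\ell a_\ell(\epsilon_\ell-\bar\epsilon)^2\bigr]$ and a genuinely cubic remainder $r_i$. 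Write $E_K^{(m)}=\sum_i\delta a_i^{(m)}u_i$. Then $\langle E_K^{(1)},E_K^{(2)}\rangle$ is a homogeneous cubic in the independent, symmetric $\epsilon_i$. Every moment $\mathbb{E}[\epsilon_p\epsilon_q\epsilon_r]$ vanishes (three distinct indices, one repeated index, or a cube), so this cross term is exactly zero. What remains, $\mathbb{E}\|E_K^{(2)}\|^2$ and the terms involving $\sum_i r_iu_i$, is $O(D_oA_0^4)$ by Cauchy--Schwarz. You could instead use the law-preserving sign flip $\epsilon\mapsto-\epsilon$ to reduce $\rho$ to its odd part, but bounding that odd part by $A_0^3$ again needs third-derivative control, not just $|\rho_i|\le C a_iA_0^2$.

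The same shortfall affects $2\,\mathbb{E}\langle E_V,E_{KV}\rangle=2\sum_ia_i\,\mathbb{E}[\delta a_i]\,\mathbb{E}\|\delta\textnormal{v}_i\mathbf{W}_O\|^2$. Your bound $|\mathbb{E}\,\delta a_i|\le C a_iA_0^2$ gives $A_0^2\|\mathbf{W}_O\|_F^2\sigma_V^2$. Since $A_0^2=\kappa^2\|q\|_1^2\sigma_K^2$ can exceed $\|q\|^2\sigma_K^2$ by a factor up to $\kappa^2d_k$, this is not covered with an absolute constant. Rewriting it ``via the second moment of $\epsilon$'' means computing $\mathbb{E}[\delta a_i^{(2)}]$ from $\mathbb{E}[(\epsilon_\ell-\bar\epsilon)^2]\le2\|q\|^2\sigma_K^2$ and bounding $\mathbb{E}|r_i|=O(A_0^3)$ separately, which again is the second-order expansion. (Your first-order bound does suffice for $\mathbb{E}\|E_{KV}\|^2$.) For the cubic remainder, the paper avoids your relative bound and its $e^{2A_0}$ factors. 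It writes $r_i=\tfrac12\int_0^1(1-\theta)^2g_i'''(\theta)\,d\theta$ with $g_i'''(\theta)=\tilde a_i(\theta)\,B_i(\theta)$, where $|B_i(\theta)|\le40A_0^3$ and $\tilde a(\theta)=\mathsf{softmax}(s+\theta\epsilon)$. Then $\sum_i\tilde a_i(\theta)=1$ gives $\sum_i|r_i|\le7A_0^3$ with no dependence on $T$. Finally, the theorem does not assume $A_0\le1$. You should add that regime, which follows from the trivial bounds $\sum_i|\delta a_i|\le2$ and $\|E_K\|\le2D_o^{1/2}$.
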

\begin{remark}[Remainder term]\label{rem:bits}
The remainder measures the difference between the exact output distortion and its second-order approximation. The theoretical derivation guarantees that it vanishes asymptotically as the quantization noise decreases. In practice, we work in low-bit regimes, the experiments in Section \ref{sec:experiments} suggest that the approximation even remains predictive in those regimes.
\end{remark}
We will continue with a proof sketch of Theorem \ref{thm:distortion}, the full proof is given in Appendix \ref{sec:softmax-derivatives}.

\begin{proof}[Proof Sketch Theorem~\ref{thm:distortion}]
In short, we expand the output error to second order in the quantization noise, take expectations, keep the leading terms, and bound the rest.

First, subtracting the quantized output from the exact one cancels the unperturbed term and splits the error into three parts, cancels against the corresponding term in $o$:
\begin{align*}
o - \hat{\textnormal{o}}
&= -\Bigl( \underbrace{\sum_{i=1}^{T} \delta\textnormal{a}_i v_i \mathbf{W}_O}_{=:\, E_K}
+ \underbrace{\sum_{i=1}^{T} a_i \delta\textnormal{v}_i \mathbf{W}_O}_{=:\, E_V}
+ \underbrace{\sum_{i=1}^{T} \delta\textnormal{a}_i \delta\textnormal{v}_i \mathbf{W}_O}_{=:\, E_{\times}} \Bigr),
\end{align*}
where $E_K$ collects the effect of the key error through the perturbed attention weights $\delta \textnormal{a}_i$, $E_V$ the effect of the value error, and $E_{\times}$ a second-order cross-term. A first- and second-order Taylor expansion of the softmax writes the weight perturbation as $\delta\textnormal{a}_i =   \delta\textnormal{a}_i^{(1)} + \delta\textnormal{a}_i^{(2)} + \textnormal{r}_i$, with the remainder $r_i $ controlled by Lemma~\ref{lem:softmax-tail}.

Second, expanding $\mathbb{E}\bigl[\|E_K + E_V + E_\times\|^2\bigr]$ and using that the key and value errors are independent and zero-mean (Assumption~\ref{ass:white-noise}), the key-value cross-terms $\mathbb{E}[ E_K E_V^\top]$ and $\mathbb{E}[E_K E_\times^\top]$ vanish exactly, and the remaining cross-terms are higher order. The distortion reduces to $\mathbb{E}[\|E_K\|^2 ] + \mathbb{E}[\|E_V\|^2]$ plus a remainder.

Third, for the key term, recentering each value by the output $o$ leaves $E_K$ unchanged, since the weight perturbations sum to zero and channel independence then gives $\mathbb{E}[E_K E_\times^\top] = \sum_{i,c} a_i^2 \,\lVert v_i \mathbf{W}_O - o \rVert^2 \, q_c^2 \,\mathbb{E}\bigl[(\delta\textnormal{k}_{i c})^2\bigr] = D_K$ up to higher-order terms.

Finally, the value term is exact. With $E_V = \sum_{i} a_i^2 (\delta\textnormal{v}_{i})^2 \mathbf{W}_{O} $ and deterministic $a_i$, token independence and zero mean leave only the diagonal, giving $D_V = \sum_{i,c} a_i^2 \,\mathbb{E}\bigl[(\delta\textnormal{v}_{i c})^2\bigr]\, \lVert \mathbf{W}_{O_c} \rVert_F^2$. 

Collecting the two leading terms and the bounded higher-order contributions yields $D = D_K + D_V + R$, with $R$ controlled through $A_0 = \kappa\|q\|_1\sigma_K$.
\renewcommand{\qedsymbol}{} 
\end{proof}

\subsection{Interpretation of the Distortion Measure}
\label{sec:implications}
\paragraph{Key distortion $D_K$} Each channel $c$ of the key token is weighted by (a) the squared attention weight $a_i^2$
, reflecting how much the token participates in the output; (b) the residual $\lVert v_i \mathbf{W}_O - o\rVert^2$, measuring how much switching attention to this token would change the output; and (c) the squared query component $q_c^2$, capturing which key channels the current query actually reads. Therefore, compressing channels for which $q_c^2$ is small or deleting tokens for which $a_i^2$ or $\lVert v_i \mathbf{W}_O - o\rVert^2$ is small does not change the distortion much.
\paragraph{Value distortion $D_V$} Each channel $c$ of value token $i$ is weighted by $a_i^2$, so only tokens that receive significant attention contribute to value distortion. The output projection $\mathbf{W}_{O_c}$ further scales the contribution of each value channel.\\
A central observation is that the distortion in~\eqref{eq:distortion-decomposition} factorizes: the token-dependent weights ($a_i^2$, $\lVert v_i \mathbf{W}_O - o\rVert^2$) and the channel-dependent weights ($q_c^2$, $\mathbf{W}_{O_c}^2$) appear in distinct factors.
Under Assumption \ref{ass:white-noise},  $\mathbb{E}[(\delta\textnormal{k}_{i c})^2]$ only
 depends on the bit allocation $b_c$
 for channel $c$ but is the same across tokens. Therefore, the total distortion factors are
 \begin{equation}\label{eq:factorized-distortion}
     D_K = \left(\sum_i a_i^2 \lVert v_i \mathbf{W}_O - o\rVert^2\right) \cdot \left(\sum_c q_c^2 \;\mathbb{E}\bigl[(\delta\textnormal{k}_{i c})^2\bigr]\right),
 \end{equation}
 and analogously for $D_V$. This separation has two consequences. First, it implies that we can focus on either channel-wise or token-wise compression and design a quantizer that allocates bits
accordingly. Second, the keys and values can be quantized independently:
 $D_K$ and $D_V$ are separate terms, with no cross-term coupling key and value quantization errors.
 Existing KV cache compression methods can be classified by which of the factors above they target. Table~\ref{tab:decomposition} summarizes this classification. A detailed explanation that identifies the methods addressing each factor and describes our method's position relative to them can be found in Appendix \ref{app:mapping-distortion}. Our method synthesizes the channel-level factors of the decomposition; the token-wise allocation is out of scope here.
\begin{table*}[t]
\centering
\caption{Distortion factors in $D = D_K + D_V$ and the design axes they motivate. Each row corresponds to one factor in the per-token, per-channel decomposition of attention-output error under KV quantization. Methods are listed by the factor they primarily target. Here, VQ$=$Vector quantization, MP $=$ mixed precision and TC $=$ transform coding. }
\label{tab:decomposition}
\small
\renewcommand{\arraystretch}{2.5}
\setlength{\tabcolsep}{4pt}
\begin{tabular}{@{}llll@{}}
\toprule
\textbf{Factor} & \textbf{Design axis} & \textbf{Mechanism} & \textbf{Representative methods}  \\
\midrule
\makecell[l]{$a_i^2$\\(token)} & \makecell[l]{token eviction} & \makecell[l]{drop tokens} & \makecell[l]{H2O \cite{zhang2023h2o}, SnapKV \cite{li2024snapkv}, StreamingLLM \cite{xiao2023efficient},\\ VATP \cite{guo2024attention}, CriticalKV \cite{feng2025identify}, Expected Attention \cite{devoto2025expected}}  \\
\makecell[l]{$\|v_i \mathbf{W}_O - o\|^2$\\(token)} & \makecell[l]{output-relevance} & \makecell[l]{drop tokens} & \makecell[l]{CAOTE \cite{goel2025caote}(pre-$\mathbf{W}_O$)}  \\
\makecell[l]{$q_c^2$, $C_Q$\\(K channel)} & \makecell[l]{query-aware\\key precision} & \makecell[l]{VQ/MP/rotation/ \\ TC+bit-alloc.} & \makecell[l]{A\textsuperscript{2}ATS \cite{he2025a2ats}, MixKVQ \cite{zhang2026mixkvq},\\OSCAR \cite{zhou2026oscar}, SQuat \cite{wang2025squat}, \textbf{Ours (Section\ref{sec:method-channel-allocation})}} \\
\makecell[l]{$\mathbf{W}_{O_c}$\\(V channel)} & \makecell[l]{output-projection-\\aware value} & TC+bit-alloc & (implicit in KVQuant \cite{hooper2024kvquant}), \textbf{Ours (Section\ref{sec:method-channel-allocation})} \\
\makecell[l]{$\mathbb{E}[(\delta\textnormal{k}_{i c})^2]$,\\$\mathbb{E}[(\delta\textnormal{v}_{i c})^2]$} & \makecell[l]{per-channel\\sensitivity} & \makecell[l]{scaling/codebook/\\SVD/MP/ TC+bit-alloc} & \makecell[l]{KIVI \cite{liu2024kivi}, KVQuant \cite{hooper2024kvquant}, KVTC\cite{staniszewski2025kv},\\PALU \cite{chang2024palu}, RateQuant \cite{zuo2026ratequant}, \textbf{Ours (Section\ref{sec:method-channel-allocation})}}  
\\ 
\bottomrule
\end{tabular}
\end{table*}

\section{AATC: Attention-Aware Bit Allocation via Transform Coding}\label{sec:method-channel-allocation}
The channel-wise allocation determines how many bits to assign to each hidden dimension of the key and value cache. As in classical transform coding, we compute it via reverse waterfilling on an empirical distribution obtained in advance from a calibration dataset. Such a set can, however, only capture the varying importance of \emph{channels}, not of individual \emph{tokens}. The allocation, therefore, operates per token, distributing bits across channels. The procedure consists of two steps: (i)~a linear transform that decorrelates and importance-orders the dimensions, and (ii)~an optimization that assigns bits to each transformed dimension so as to minimize the attention-aware distortion under a global budget.
\subsection{Step 1: Whitening-Based Feature Decorrelation}\label{subsec:whitening}
KV caches are especially amenable to transform coding, since key and value vectors are strongly correlated across dimensions and their effective rank is typically below  $d_k$~\cite{chang2024palu, staniszewski2025kv}.
We achieve decorrelation by computing a whitening-based SVD of the weight matrices $\mathbf{W}_K$ and $\mathbf{W}_V$, which approximately satisfy the independence assumptions required for classical transform coding.
Given calibration activations $\mathbf{X} \in \mathbb{R}^{T \times d_{\mathrm{model}}}$ and a projection matrix $\mathbf{W} \in \mathbb{R}^{d_{\mathrm{model}} \times d}$ (either $\mathbf{W}_K$ or $\mathbf{W}_V$), we proceed as follows to compute a transformed representation $\mathbf{H}$ (denoted by $\mathbf{H}^K$ or $\mathbf{H}^V$, respectively).   Define the empirical input covariance $\boldsymbol{\Sigma}_X = \frac{1}{T} \mathbf{X}^\top \mathbf{X}$.
Let $\boldsymbol{\Sigma}_X = \mathbf{L} \mathbf{L}^\top$ be its Cholesky decomposition.
The \emph{whitened weight matrix} is $\widetilde{\mathbf{W}} = \mathbf{L}^\top \, \mathbf{W}$,
which absorbs the input statistics into the projection. Let $\widetilde{\mathbf{W}} = \mathbf{U} \mathbf{S} \mathbf{P}^\top$ be its reduced SVD.
We then factorize the original projection as
\begin{align}\label{eq:AB-factorization}
    \mathbf{W} &= \mathbf{L}^{- \top} \mathbf{U} \mathbf{S} \mathbf{P}^\top = \mathbf{A} \, \mathbf{B},\\
    \text{with} \quad \mathbf{A} &= \mathbf{L}^{- \top} \mathbf{U} \sqrt{\mathbf{S}}, \quad \mathbf{B} = \sqrt{\mathbf{S}}\, \mathbf{P}^\top. \notag
\end{align}
At inference time, instead of caching $\mathbf{K} = \mathbf{X} \mathbf{W}_K \in \mathbb{R}^{T \times d_k}$, we cache the transformed representation
\begin{equation}\label{eq:transformed-cache}
    \mathbf{H}^K = \mathbf{X} \mathbf{A}^K \in \mathbb{R}^{T \times d_k},
\end{equation}
and recover the original keys via $\mathbf{K} = \mathbf{H}^K \mathbf{B}^K$. Same for the values, we have $\mathbf{H}^V = \mathbf{X} \mathbf{A}^V$ and therefore $\mathbf{V} = \mathbf{H}^V \mathbf{B}^V$. In the transformed space, the dimensions of $\mathbf{H}^K$ and $\mathbf{H}^V$ are decorrelated and ordered by the singular values $\mathbf{S}  = \mathbf{A}^\top \mathbf{\Sigma}_X \mathbf{A}=\mathbf{H}^T \mathbf{H}$, which capture the combined effect of input variance and projection magnitude. High-index dimensions correspond to small singular values and contribute little to the output, making them natural candidates for low-precision or zero-bit allocation.
\begin{remark}\label{rem:head-group}
In practice, one often considers a so-called multi-head attention, where each $\mathbf{W}$ is split into $H$ submatrices ($\mathbf{W}_1,...,\mathbf{W}_H \in \mathbb{R}^{d_{in}\times d_k/H}$) referred to as heads and computes the attention individually for each head before eventually combining it again. For the transform coding step in such a setup, the heads get combined into $G$ head-groups and therefore we have $\mathbf{B}_1,..., \mathbf{B}_G$. Further, $\mathbf{B}_i^K \in \mathbb{R}^{d_k/G}$, same for the values we have $\mathbf{B}_i^V \in \mathbb{R}^{d_v/G}$. $\mathbf{A}$, on the other hand, remains global.
\end{remark}
The above transform is the same as the one used in  \cite{chang2024palu} in the context of dimension reduction and its decorrelation step coincides with the PCA of $\mathbf{K}$, hence with the transform of a per-layer KVTC \cite{staniszewski2025kv}, for details see Appendix \ref{subsec:svdequalwhitening}. The equivalence concerns the transform only: KVTC operates on a global, cross-layer cache and allocates based on reconstruction error, whereas we operate per layer under a global bit budget and allocate based on the output-aware distortion of Theorem~\ref{thm:distortion}.

\subsection{Step 2: Attention-Aware Bit Allocation via Waterfilling}\label{subsubsec:bit-allocation}
With the transform in hand, we formulate the channel-wise bit allocation as a
constrained optimization problem. A detailed derivation of it can be found in Appendix \ref{sec:derivation-aatc-waterfillling-appendix}.

\paragraph{The waterfilling problem for the keys and values}
The channel-wise allocation for the keys is the solution to
\begin{align}\label{eq:waterfilling-keys}
    &\min_{\{b_{\ell, c}\}} \sum_{\ell=1}^{L}  \sum_{c=1}^{d_k} w_{\ell,c}^{(K)}  (\sigma_{\ell,c}^{(K)})^2 2^{-2b_{\ell,c}}
    \text{ s.t. } \sum_{\ell=1}^{L}  \sum_{c=1}^{d_k} b_{\ell,c} \leq \mathcal{B}_K,\\
    &\text{ with }w_{\ell,c}^{(K)} = \frac{1}{T}  \sum_{i=1}^T ( (\mathbf{B}^{K}_{\ell,c})^{\top} q_{( i,\ell,c)})^2 \notag
\end{align}
a per-channel weight with $i$ being the tokens from the calibration set, $T$ the size of the calibration set and $b_{\ell,c} \in \{0, 1, \dots, b_{\max}\}$
where $\mathcal{B}_K$ is the total bit budget for the keys.
The continuous relaxation of~\eqref{eq:waterfilling-keys} admits a weighted variant of the classical closed-form \emph{reverse waterfilling} solution~\cite[Ch.~10]{cover1999elements}. A channel receives $b_{\ell, c}^* = \tfrac{1}{2}\log_2(w_{\ell,c}^{(K)} \sigma_{\ell,c}^{(K)})^2 / \lambda)$ bits when $w_{\ell,c}^{(K)} (\sigma_{\ell,c}^{(K)})^2 > \lambda$ and zero bits otherwise, with the water level $\lambda > 0$ chosen so that $\sum_\ell \sum_c b_{\ell, c}^* = \mathcal{B}_K$. Channels below the water level are discarded entirely.
The waterfilling problem for the values reads as
\begin{align*}
    &\min_{\{b_{\ell, c}\}} \sum_{\ell=1}^{L} \sum_{c=1}^{d_v} w_{\ell,c}^{(V)}  (\sigma_{\ell,c}^{(V)})^2 2^{-2b_{\ell,c}}
    \text{ s.t. } \sum_{\ell=1}^{L} \sum_{c=1}^{d_v} b_{\ell,c} \leq \mathcal{B}_V,
\end{align*}
with  $w_{\ell,c}^{(V)} = \lVert (\mathbf{B}^V \mathbf{W}_O)_{\ell,c}^\top \rVert_F^2$ and $b_{\ell,c} \in \{0, 1, \dots, b_{\max}\}$.
\paragraph{The right scaling}
After fixing the bits per channel, one needs to think about how to choose the scale to apply uniform scalar quantization (Definition \ref{def:uniform-quant}) to the fixed number of bits. This we do in the following way: we first calculate a channel scale $(s_{\ell,c})$ on the calibration data by calculating the 99th percentile of $|h_{\ell,c}|$ across all calibration tokens for every channel existing for keys and for values. During inference time, we compute the token-wise scaling for every token in every layer as $\alpha_{\ell,t} = \max_c(|h_{\ell,c,t}/s_{\ell,c}|)$ and quantize
\begin{align*}
\mathsf{qtz}(h) &= \mathsf{clamp}\!\left(\Big\lfloor \tfrac{h}{\Delta_{\ell,c,t}} + \tfrac{1}{2} \Big\rfloor,\, -2^{b_{\ell,c}-1},\, 2^{b_{\ell,c}-1}-1\right), \\
\mathsf{dqtz}(c) &= \Delta_{\ell,c,t}\, \mathsf{qtz}(h) \text{ with } \Delta_{\ell, c,t} := \frac{2\alpha_{\ell,t}\, s_{\ell,c}}{2^{b_{\ell,c}}}.
\end{align*}
In practice we actually compute the token-wise scaling per head-group $g$ (defined in Remark \ref{rem:head-group}), so we obtain $\alpha_{\ell,g,t}=\max_{c \in g}(|h_{\ell,c,t}/s_{\ell,c}|)$
\paragraph{Separate budgets for keys and values.}
Keys and values play asymmetric roles in attention, this is why we solve separate allocation problems for $\mathbf{W}_K$ and $\mathbf{W}_V$, each with its own budget $\mathcal{B}_K$ and $\mathcal{B}_V$. 

\begin{remark}[Zero-bit dimensions and rank reduction]
The dynamic program naturally assigns zero bits to trailing dimensions whose weight $w_c$ is too small to justify even one bit, which is equivalent to rank reduction. The effective cache dimensionality is reduced from $d_k$ to the number of dimensions receiving at least one bit. Unlike fixed-rank SVD truncation like PALU~\cite{chang2024palu}, the cutoff adapts to the attention-aware weight profile and the available budget. In contrast to our method, PALU does not optimize bit allocation. In the experiments, we also see favorable performance for our method compared to PALU (see Section \ref{sec:experiments}).
\end{remark}

To summarize, our method operates in two stages: an offline calibration stage that determines the channel-wise bit allocation, and an online inference stage that applies quantization during inference.
We summarize the full procedure of our method, called attention-aware transform coding (AATC), in Algorithm~\ref{alg:aatc}.
  \begin{algorithm}[t]
  \caption{Attention-Aware Transform Coding (AATC) for KV cache Compression}\label{alg:aatc}
  \begin{algorithmic}[1]
  \Require Model weights $\{\mathbf{W}_K^{(\ell)}, \mathbf{W}_V^{(\ell)}, \mathbf{W}_O^{(\ell)}\}$; calibration data $\mathbf{X}$;
  bit budget $\mathcal{B}$; min bits $b_{\min}$
  \Ensure Compressed KV cache with attention-aware bit allocation
  \Statex \textbf{--- Calibration (offline, once per model) ---}
  \For{each layer $\ell$}
      \State Compute $\boldsymbol{\Sigma}_X^{(\ell)}$, Cholesky $\mathbf{L}^{(\ell)}$
      \State SVD of full $\mathbf{W}_K^{(\ell)}$: obtain $\mathbf{A}^{K}_{\ell}$ and
  $\mathbf{B}^{K}_{\ell}$
      \State Run forward pass; record latent variance $(\sigma_{\ell,c}^{(K)})^2$ for each $c \in \{1,\ldots,d_k\}$
          \State Compute $w_{\ell,c}^{(K)} \gets \frac{1}{T}\sum_i \bigl(\mathbf{B}^{K}_{\ell}[c,:]\cdot q_i\bigr)^2$
          \State Compute $\mathbf{B}^{V}_{\ell}$, $\sigma_{\ell,c}^{(V)}$, $ w_{\ell,c}^{(V)}  \gets \lVert (\mathbf{B}^{V}\mathbf{W}_O)_{\ell,c}^\top \rVert_F^2$
  \EndFor
  \State Allocate bits globally via reverse waterfilling over all $(\ell,c)$
  \[
      \{b_c^{(K,\ell)}\} \;\gets\; \operatorname*{arg\,min}_{\sum_{\ell,c} b_c^{(K)} \leq \mathcal{B}_K}\; \sum_{\ell,c} \Bigl(w_{\ell,c}^{(K)} \cdot (\sigma_{\ell,c}^{(K)})^2 2^{-2b_{\ell,c}}   \Bigr)
  \]
\[
      \{b_c^{(V,\ell)}\} \;\gets\; \operatorname*{arg\,min}_{\sum_{\ell,c}
  b_c^{(V)} \leq \mathcal{B}_V}\; \sum_{\ell,c} \Bigl( w_{\ell,c}^{(V)} \cdot (\sigma_{\ell,c}^{(V)})^2 2^{-2b_{\ell,c}} \Bigr)
  \]

  \Statex \textbf{--- Inference (online, per decode step $t$) ---}
  \For{each new token at step $t$}
      \For{each layer $\ell$}
          \State Project: $h^K_{\ell,t} \gets x_t^{(\ell)} \mathbf{A}^{K}_{\ell}$,\; $h^V_{\ell,t} \gets
  x_t^{(\ell)} \mathbf{A}^{V}_{\ell}$
\State Append $\mathsf{qtz}(h^K_{\ell,t})$ and $\mathsf{qtz}(h^V_{\ell,t})$ to cache
          \State Reconstruct $K_i^{(\ell)} \gets  \hat{h}^K_{\ell,i} \mathbf{B}^{K}_{\ell}$ $V_i^{(\ell)} \gets \hat{h}^V_{\ell,i}
 \mathbf{B}^{V}_{\ell} $
          \State Compute $\alpha_{t,i} \gets \mathsf{softmax}(q_t K_i^{\top} / \sqrt{d})$
          \State Output
  $o_t^{(\ell)} \gets \sum_{i \leq t} \alpha_{t,i}\, V_i^{(\ell)}$
 \EndFor
  \EndFor
  \end{algorithmic}
  \end{algorithm}

\section{Experiments}
\label{sec:experiments}
To show the performance of our method we evaluate it on \texttt{LongBench} \cite{bai2024longbench}, \texttt{RULER} \cite{hsieh2024ruler}, \texttt{GSM8K} \cite{cobbe2021training}, \texttt{MATH-500} \cite{lightman2023lets} and \texttt{MMLU-Pro} \cite{wang2024mmlu} (math and computer science subtasks) to validate its qualitative performance against existing methods (KIVI, PALU and KVQuant) on \texttt{Llama-3.1-8B-Instruct} \cite{grattafiori2024llama} and \texttt{Qwen-2.5-7B-Instruct} \cite{Yang2024Qwen25TR}. For KIVI and KVQuant, we reimplemented their method and, for fairness, added a recent-token window of size $128$, which gets flushed in blocks of $32$ tokens. We also kept the first $4$ tokens in full precision. AATC is implemented as described in Algorithm \ref{alg:aatc}. We further keep the first $s$ tokens and the most recent $w$ tokens in full precision, following standard practice~\cite{xiao2023efficient, staniszewski2025kv}. We use $s = 4$ and $w = 128$. Once the recent-token window is full, we quantize the oldest $16$ tokens and refill until the window again holds $w$ tokens.  For PALU, we applied a transform with $30\%$ dimension reduction and afterwards quantized to $4$ bits. Further, we used $s=4$ and $w=128$ for the non-zero dimensions. 
We further evaluate AATC var-only, which uses the same per-layer transform and reverse-waterfilling allocation but drops the attention-aware weights  $q_c^2$ and $W_{O_c}$, allocating on the quantization-error variance alone. 
KVTC\cite{staniszewski2025kv} is the closest method to ours: it likewise decorrelates the cache and allocates bits by transform coding. It differs on two axes: it uses a single global, cross-layer transform (we use per-layer transforms under a global budget, the natural granularity given that each layer maintains its own KV cache) and it allocates on reconstruction error (we allocate on the output-aware distortion of Theorem \ref{thm:distortion}, via $q_c^2$ and $W_{O_c}$). In the per-layer setting, the two coincide: the transforms match (Appendix \ref{subsec:svdequalwhitening}) and the var-only's variance-based allocation is exactly the reconstruction error criterion, so the var-only variant above can be interpreted as a per-layer analog of the KVTC algorithm. 
For more information on the methods, see Appendix  \ref{subsec:compare-methods}.  Llama has 8 KV-heads, 32 Query heads and 32 layers. Qwen has 4 KV-heads, 28 Query heads and 28 layers. Therefore, both methods use grouped query attention. Our method we use with $G=2$ head-groups for Llama and $G=1$ head-group for Qwen.
To generate the transform $AB$ for the whitening-based feature decorrelation, we use wikitext2 \cite{merity2016pointer}. For the rate-distortion calibration, we use a balanced mixture of FineWeb \cite{penedo2024fineweb} (sample-10BT, web text) and OpenR1-Math-220k \cite{openr1} (mathematical reasoning traces). We draw 256 calibration sequences of length 2048 tokens, split equally between FineWeb and OpenR1-Math-220k (128 sequences each), and shuffle them together before calibration. The mixture is intended to provide coverage over both general-domain and  structured reasoning text distributions, following the approach of KVTC \cite{staniszewski2025kv}.

\subsection{Main Results}
\label{subsec:main-results}
 
Table \ref{tab:mainresults} reports the main comparison; we mark in bold the best compressed method. 
The FP16 model is a reference and is not bolded. For each benchmark we take $\sigma$ to be the standard error of its score, which is the standard deviation of the per-example scores divided by $\sqrt{N}$ over the $N$ evaluated samples. We call a method statistically indistinguishable from FP16 when its score lies within 2$\sigma$ of the FP16 score on that benchmark.
 
Across all $18$ cells, AATC is never statistically distinguishable from the
uncompressed FP16 model. At $5.82\times$ compression it is lossless to within
evaluation noise on every benchmark and both backbones. This operating points reduces the memory budget of Llama from 1.07 GB for the FP16 baseline to 184MB. On Qwen, it is moreover the
unique best compressed method on long-context retrieval (\texttt{RULER}-32k) and on
multiple-choice reasoning (\texttt{MMLU-Pro} math), exceeding the strongest quantization
baseline by $7.2$ and $16.8$ points. The two backbones respond differently to
compression. On Llama, where even naive quantization (KIVI) stays near-lossless,
AATC and AATC var-only are statistically tied throughout. On Qwen, which is
markedly harder to compress (KIVI falls to $0.351$ on \texttt{LongBench} and below $0.28$
on \texttt{RULER}), the additional adaptivity of the full method gives consistent gains
over AATC var-only (\texttt{RULER}-32k $+2.5$, \texttt{MMLU}-m $+3.6$). PALU is competitive only at short contexts, where it is best at \texttt{RULER}-4k on Qwen, but it attains the weakest
\texttt{LongBench} average ($0.483$), a behavior we attribute to its low-rank
parameterization. For a detailed explanation, see Appendix \ref{subsec:appendix-longbench}.
 
We report further analyses in the appendix: a per-subtask \texttt{LongBench}
breakdown (in Appendix \ref{subsec:appendix-longbench}), an average bit sweep from 4-bit to 1-bit
(Appendix \ref{subsec:appendix-bitsweep}), detailed results at the
aggressive 2-bit setting (Appendix \ref{subsec:appendix-bitsweep}), an ablation of the allocation and
calibration choices (Appendix \ref{subsec:appendix-ablation}), full
implementation details for all methods (Appendix \ref{subsec:compare-methods}) and details about the benchmarks (Appendix \ref{subsec:benchmarks-quality}).

\subsection{Benefits of Attention-Awareness}
\label{subsec:divergence}
 
At the $5.82\times$ operating point of Table \ref{tab:mainresults}, AATC var-only is
statistically indistinguishable from full AATC for Llama, which raises the question of
whether including the $q_c^2$ and $W_O$ terms in the bit-allocation is necessary for all models. Figure \ref{fig:divergence} answers this by
evaluating both variants at a more aggressive $\approx\!7\times$ compression, corresponding to a 2-bit budget.
At 2.5-bit (solid), the two curves are indistinguishable at every context length for Llama. For Qwen we already see a small advantage of AATC.
At 2-bit (dashed) the full method retains near-baseline \texttt{RULER} accuracy across
lengths for both models, whereas AATC var-only degrades sharply as the context grows. The more we compress, the better we can see the effect of the attention-aware allocation. We see, especially under aggressive compression and in long-context regimes, which is what we care about.

\begin{figure}
    \centering
    \includegraphics[width=1.0\linewidth]{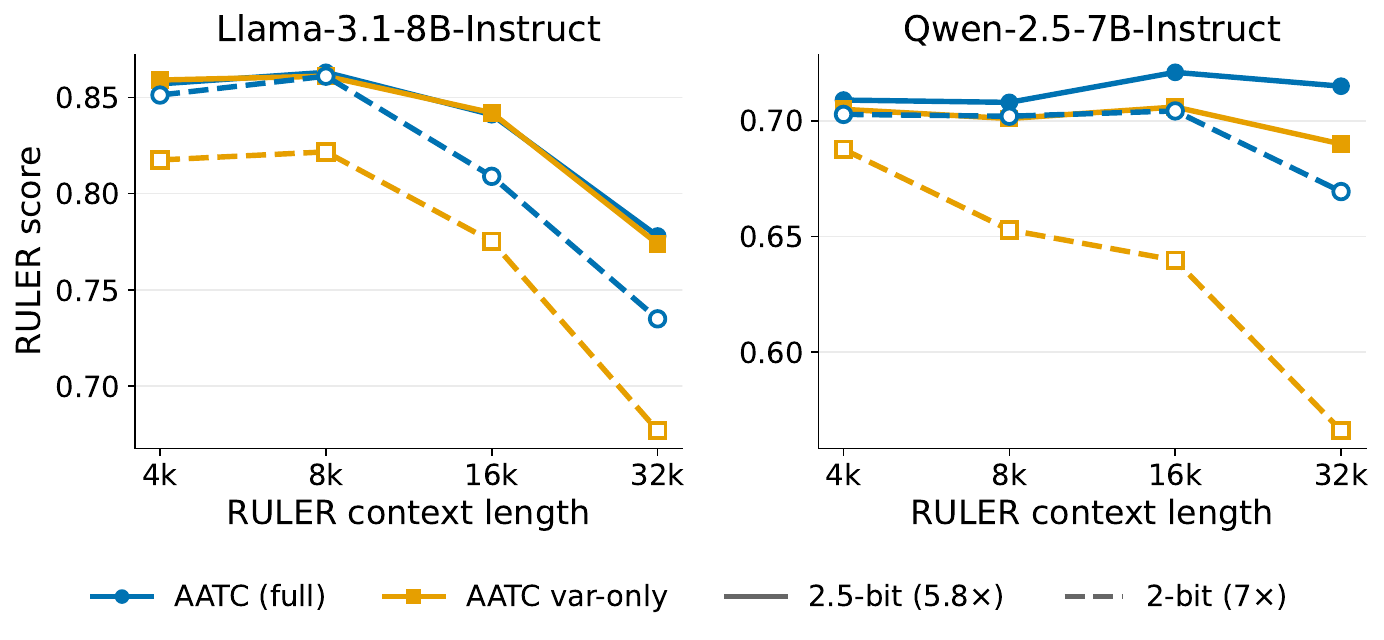}
    \caption{\texttt{RULER} score versus context length for AATC and AATC var-only at
2.5-bit (solid) and 2-bit (dashed), on \texttt{Llama-3.1-8B-Instruct} and \texttt{Qwen-2.5-7B-Instruct}. }
    \label{fig:divergence}
\end{figure}

\begin{table*}[ht]
\centering
\small
\setlength{\tabcolsep}{4pt}
\renewcommand{\arraystretch}{1.0}
\caption{Comp.\ is the KV cache compression ratio relative to the FP16 baseline. \texttt{LongBench} is the task-averaged score over seven English subsets (per-subtask breakdown in Table~\ref{tab:longbench-full}). Here $\sigma$ is the standard error of each score, the standard deviation of the per-example scores divided by $\sqrt{N}$ over the $N$ evaluated samples. A method is called statistically indistinguishable from FP16 when its score lies within $2\sigma$ of the FP16 score.} 
\label{tab:mainresults}
\begin{tabular}{l | c | c | cccc | cccc}
\toprule
& & \textbf{\texttt{LongBench}} & \multicolumn{4}{c|}{\textbf{\texttt{RULER}}} & \multicolumn{4}{c}{\textbf{Reasoning}} \\
Method & Comp. & avg. & 4k & 8k & 16k & 32k & \texttt{GSM8K} & \texttt{MMLU}-m & \texttt{MMLU}-cs & \texttt{MATH-500} \\
\midrule
\multicolumn{11}{c}{\texttt{Llama-3.1-8B-Instruct}} \\
\midrule
Baseline    & 1.00 & 0.570 & 0.859 & 0.856 & 0.828 & 0.775 & 0.789 & 0.438 & 0.456 & 0.454 \\
KIVI        & 5.00 & 0.564 & 0.846 & 0.840 & 0.810 & 0.763 & 0.730 & 0.390 & 0.420 & 0.452 \\
PALU        & 5.39 &  0.550 & 0.851   & 0.849     &  0.83     &     0.769   &   0.617    &  0.274     &  0.320     &     0.314  \\
KVQuant     & 5.91 & 0.549 & 0.715 & 0.729 & 0.676 & 0.654 & 0.754 & 0.376 & 0.402 & 0.392 \\
AATC var    & 5.82 & \textbf{0.566} & \textbf{0.859} & 0.861 & \textbf{0.842} & 0.774 & \textbf{0.780} & 0.402 & \textbf{0.449} & \textbf{0.490} \\
AATC (ours) & 5.82 & 0.558 & 0.857 & \textbf{0.863} & 0.841 & \textbf{0.778} & 0.773 & \textbf{0.416} & 0.442 & 0.488 \\
\midrule
\multicolumn{11}{c}{\texttt{Qwen-2.5-7B-Instruct}} \\
\midrule
Baseline    & 1.00 & 0.574 & 0.720 & 0.716 & 0.718 & 0.720 & 0.818 & 0.708 & 0.567 & 0.680 \\
KIVI        & 5.00 & 0.351 & 0.279 & 0.224 & 0.277 & 0.233 & 0.165 &        0.166 &         0.222      & \textbf{0.680} \\
PALU        & 5.39 & 0.483 & \textbf{0.725} & 0.690 & 0.682 &   0.625     & 0.672 &  0.542   &       0.454   & 0.492 \\
KVQuant     & 5.91 & 0.558 & 0.694 & 0.674 & 0.667 & 0.643 & 0.801 & 0.478 & 0.490 & \textbf{0.680}    \\
AATC var    & 5.82 & 0.558 & 0.705 & 0.701 & 0.706 & 0.690 & \textbf{0.824} & 0.674 & 0.551 & 0.668 \\
AATC (ours) & 5.82 & \textbf{0.567} & 0.709 & \textbf{0.708} & \textbf{0.721} & \textbf{0.715} & 0.822 & \textbf{0.710} & \textbf{0.573} & \textbf{0.680} \\
\bottomrule
\end{tabular}
\end{table*}

\section{Limitations and Future Work}
On the theoretical side, our analysis assumes that tokens are independent, which may not always hold in practice, since natural language can exhibit non-trivial correlation structures. While this assumption is common in related works and does not seem to have a strong negative impact on the encoding quality, we still think characterizing the dependence, or deriving guarantees under weaker assumptions, is an interesting direction for future theoretical work.
On the empirical side, our implementation is not system-level optimized, in
particular, it lacks a dedicated CUDA implementation, which would be required
before the method could be used in production. Our evaluation also covers only
two model families (Llama and Qwen), while these differ in scale and attention configuration, testing on a wider range of models is left for future work.
Finally, it would be interesting to study how our method interacts with
token-eviction methods~\cite{zhang2023h2o, xiao2023efficient, li2024snapkv}
or how it can be extended to a global cache as it is deployed in in~\cite{staniszewski2025kv}.
\newpage

\appendix
\subsection{Proof of Theorem \ref{thm:distortion}}
\label{sec:softmax-derivatives}

\begin{proof}[Proof of Theorem~\ref{thm:distortion}]
\label{sec:proofofdistortionformula}
We proceed in four steps: (1)~express the output error in terms of the perturbations
$\delta\textnormal{a}_i$, $\delta\textnormal{v}_i$; (2)~show that the cross-terms vanish or are of higher order in
expectation; (3)~compute the key distortion term; (4)~compute the value distortion term.
Throughout, $C$ denotes an absolute constant depending only on $q$, $\mathbf{W}_O$, and the attention
weights $\{a_i\}$. Its value may change from line to line. We write
$\sigma_K^2 := \max_{i,c}\mathbb{E}[(\delta\textnormal{k}_{i c})^2]$ and
$\sigma_V^2 := \max_{i,c}\mathbb{E}[(\delta\textnormal{v}_{i c})^2]$.

\medskip
\noindent\textbf{Step 1: Decomposition of the output error.}
Let $\hat{\textnormal{a}}_i$ denote the attention weights computed from the
quantized keys and define the attention-weight perturbation
$\delta\textnormal{a}_i := \hat{\textnormal{a}}_i - a_i$,
so that $\hat{\textnormal{a}}_i = a_i + \delta\textnormal{a}_i$ .
The unperturbed and perturbed outputs are
$o = \sum_{i=1}^{T} a_i\, v_i\, \mathbf{W}_O$ and $ \hat{\textnormal{o}} = \sum_{i=1}^{T} \hat{\textnormal{a}}_i\, \hat{\textnormal{v}}_i\, \mathbf{W}_O
= \sum_{i=1}^{T} (a_i + \delta\textnormal{a}_i)(v_i + \delta\textnormal{v}_i)\, \mathbf{W}_O$ .
Multiplying out the product $(a_i + \delta\textnormal{a}_i)(v_i + \delta\textnormal{v}_i) = a_i v_i + a_i\,\delta\textnormal{v}_i
+ \delta\textnormal{a}_i\, v_i + \delta\textnormal{a}_i\,\delta\textnormal{v}_i$ and subtracting, the unperturbed term $a_i v_i$
cancels against the corresponding term in $o$:
\begin{align*}
o - \hat{\textnormal{o}}
=& \sum_{i=1}^{T} a_i v_i \mathbf{W}_O\\ &- \sum_{i=1}^{T} \bigl[a_i v_i + a_i \delta\textnormal{v}_i
   + \delta\textnormal{a}_i v_i + \delta\textnormal{a}_i \delta\textnormal{v}_i\bigr] \mathbf{W}_O \nonumber \\
= &-\Bigl( \underbrace{\sum_{i=1}^{T} \delta\textnormal{a}_i v_i \mathbf{W}_O}_{=:\, E_K}
+ \underbrace{\sum_{i=1}^{T} a_i \delta\textnormal{v}_i \mathbf{W}_O}_{=:\, E_V}
+ \underbrace{\sum_{i=1}^{T} \delta\textnormal{a}_i \delta\textnormal{v}_i \mathbf{W}_O}_{=:\, E_{\times}} \Bigr),
\end{align*}
and consequently
\begin{equation}\label{eq:norm-equals}
\mathbb{E}\bigl[\|o - \hat{\textnormal{o}}\|^2\bigr] = \mathbb{E}\bigl[\|E_K + E_V + E_\times\|^2\bigr].
\end{equation}
The term $E_K$ captures the effect of key quantization (through the perturbed attention
weights), $E_V$ captures the effect of value quantization and
$E_\times$ is a second-order cross-term involving both perturbations.

\medskip
\noindent\textbf{Softmax expansion.}
Both the key distortion (computed in Step 3 below) and the cross-terms (computed in Step 2 below) require the response of the
attention weights to the key perturbation. Because the attention logits are given by $q^\top k_i$, the quantized keys perturb them by
\begin{align*}
q^\top \hat{\textnormal{k}}_i = q^\top(k_i + \delta\textnormal{k}_i) = q^\top k_i + \alpha_i,
\\ \text{where }
\alpha_i := q^\top \delta\textnormal{k}_i = \sum_{c} q_c \delta\textnormal{k}_{i c},
\end{align*}
so $\alpha_i$ is the scalar perturbation to the $i$-th logit induced by the key error. Under Assumption~\ref{ass:white-noise} the $\delta\textnormal{k}_{i c}$ are
independent and zero-mean, hence so are the $\alpha_i$, so, in particular, $\mathbb{E}[\alpha_i \alpha_\ell] = 0$
for $i \neq \ell$. Furthermore, expanding the square of the sum and using channel independence,
 $\mathbb{E}[\delta\textnormal{k}_{i c}\delta\textnormal{k}_{i c'}] = \mathbbm{1}_{c=c'}\mathbb{E}[(\delta\textnormal{k}_{i c})^2]$, one obtains that,
\begin{equation}\label{eq:alpha-sq}
\mathbb{E}[\alpha_i^2]
= \mathbb{E}\Bigl[\bigl(\textstyle\sum_c q_c\,\delta\textnormal{k}_{i c}\bigr)^2\Bigr]
= \sum_c q_c^2\, \mathbb{E}[(\delta\textnormal{k}_{i c})^2]
\;\le\; \|q\|^2 \sigma_K^2.
\end{equation}
The last step uses that $\mathbb{E}[(\delta\textnormal{k}_{i c})^2] \le \sigma_K^2$ and
$\sum_c q_c^2 = \|q\|^2$. Moreover, as a linear combination of independent symmetric variables $\alpha_i$ is itself
symmetric and all of its odd moments vanish, in particular
$\mathbb{E}[\alpha_i] = \mathbb{E}[\alpha_i^3] = 0$.
Under Assumption~\ref{ass:bounded}, the logit perturbations are moreover bounded, which means $|\alpha_i| \le \sum_c |q_c|\,|\delta\textnormal{k}_{i c}| \le  \kappa\|q\|_1\sigma_K := A_0$ almost surely.
The perturbations will be centered at
the attention-weighted mean
$\bar\alpha := \sum_{\ell=1}^{T} a_\ell\,\alpha_\ell.$
The weights $a_\ell$ being the unperturbed attention weights leads to $a_\ell \ge 0$ and $\sum_\ell a_\ell = 1$.
Since $\bar\alpha$ is a convex combination of the $\alpha_\ell$, also $|\bar\alpha| \le A_0$ and
$|\alpha_i - \bar\alpha| \le 2A_0$ almost surely.
Writing $s_\ell:= q^\top k_\ell$ for the unperturbed logits, the attention weight is
$\hat{\textnormal{a}}_i = \mathsf{softmax}(s + \alpha)_i$. Decompose the attention-weight perturbation as
\begin{align}\label{eq:softmax-2nd}
\delta\textnormal{a}_i = &\underbrace{a_i\bigl(\alpha_i - \bar\alpha\bigr)}_{=:\,\delta\textnormal{a}_i^{(1)}}
+ \underbrace{\tfrac{1}{2} a_i\Bigl[\bigl(\alpha_i - \bar\alpha\bigr)^2
- \sum_{\ell} a_\ell\bigl(\alpha_\ell - \bar\alpha\bigr)^2\Bigr]}_{=:\,\delta\textnormal{a}_i^{(2)}} +\textnormal{r}_i
\end{align}
where $\delta\textnormal{a}_i^{(1)}$ is linear, $\delta\textnormal{a}_i^{(2)}$ quadratic in the entries of $\alpha$ and
$\textnormal{r}_i := \delta\textnormal{a}_i - \delta\textnormal{a}_i^{(1)} - \delta\textnormal{a}_i^{(2)}$ is defined as the exact difference. By
Remark~\ref{rem:softmax-expansion}, $\delta\textnormal{a}_i^{(1)}$ and $\delta\textnormal{a}_i^{(2)}$ are precisely the
first- and second-order Taylor terms of the softmax response
$\theta \mapsto \mathsf{softmax}(s+\theta\alpha)_i$ at $\theta = 0$, so $\textnormal{r}_i$ is its exact
second-order Taylor remainder. It is bounded in Lemma~\ref{lem:softmax-tail} below.
Because the $\hat{\textnormal{a}}_i$ and $a_i$ are
both probability distributions ($\sum_i \hat{\textnormal{a}}_i = \sum_i a_i = 1$), their difference satisfies
\begin{equation}\label{eq:zero-sum}
\sum_i \delta\textnormal{a}_i = \sum_i(\hat{\textnormal{a}}_i - a_i) = 0 .
\end{equation}
We use this expansion throughout Steps 2--4.

\medskip
\noindent\textbf{Step 2: Controlling the cross-terms in expectation.}
Expanding the squared norm in~\eqref{eq:norm-equals} yields,
\begin{align}\label{eq:squared-expansion}
\|E_K + E_V + E_\times\|^2 = &\|E_K\|^2 + \|E_V\|^2 + \|E_\times\|^2\\
&+ 2 E_K E_V^\top  + 2 E_K E_\times^\top  + 2 E_V E_\times^\top \notag .
\end{align}
We will now show that $\mathbb{E}[E_K E_V^\top ]$ and $\mathbb{E}[E_K E_\times^\top ]$ vanish exactly, while
$\mathbb{E}[E_V E_\times^\top ]$ and $\mathbb{E}[\|E_\times\|^2]$ are of higher order and collected
into the remainder. The leading terms $\mathbb{E}[\|E_K\|^2]$ and
$\mathbb{E}[\|E_V\|^2]$ are evaluated in Steps 3 and 4.
\smallskip

\noindent\emph{(a) The term $\mathbb{E}[E_K E_V^\top ]$ vanishes exactly.}
Writing both factors as sums gives,
\begin{align*}
\mathbb{E}[E_K E_V^\top]
&= \sum_{i=1}^{T} \mathbb{E}\left[ \sum_{\ell=1}^{T} \delta\textnormal{a}_i  a_\ell  (v_i \mathbf{W}_O) (\delta\textnormal{v}_\ell \mathbf{W}_O)^\top \right]\\
& = \sum_{i=1}^{T} \sum_{\ell=1}^{T} (v_i\, \mathbf{W}_O) \,\mathbb{E}_{\delta\textnormal{k}}[\delta\textnormal{a}_i] a_\ell \mathbf{W}_O^\top\, \mathbb{E}_{\delta\textnormal{v}}[\delta\textnormal{v}_\ell^\top]
= 0.
\end{align*}
As the perturbation $\delta\textnormal{a}_i$ does not depend on the value errors $\delta\textnormal{v}_\ell$, the independence of key and value errors (Assumption~\ref{ass:white-noise}) yields that $\delta\textnormal{a}_i$ and $\delta\textnormal{v}_\ell$ are also independent and the last row follows,
since $\mathbb{E}[\delta\textnormal{v}_\ell] = 0$.

\smallskip
\noindent\emph{(b) The term $\mathbb{E}[E_K E_\times^\top]$ vanishes exactly.}
Analogously to (a), $\delta\textnormal{a}_i\,\delta\textnormal{a}_\ell$ and $\delta\textnormal{v}_\ell$ are independent. Thus,
\begin{align*}
\mathbb{E}[E_K E_\times^\top]
&= \sum_{i=1}^{T} \sum_{\ell=1}^{T} \mathbb{E}[\delta\textnormal{a}_i\, \delta\textnormal{a}_\ell\;(v_i \mathbf{W}_O) (\delta\textnormal{v}_\ell\, \mathbf{W}_O)^\top]\\
&= \sum_{i=1}^{T} \sum_{\ell=1}^{T} \mathbb{E}_{\delta\textnormal{k}}[\delta\textnormal{a}_i\, \delta\textnormal{a}_\ell]\;(v_i \mathbf{W}_O) \mathbf{W}_O^\top\,\mathbb{E}_{\delta\textnormal{v}}[\delta\textnormal{v}_\ell^\top]
= 0.
\end{align*}

\smallskip
\noindent\emph{(c) $\mathbb{E}[E_V E_\times^\top]$ is
$\mathcal{O}\bigl((\|q\|^2\sigma_K^2 + A_0^3)\,\|\mathbf{W}_O\|_F^2\,\sigma_V^2\bigr)$.}
Expanding both factors,
\begin{align*}
\mathbb{E}[E_V^\top E_\times]
&= \sum_{i=1}^{T} \sum_{\ell=1}^{T} \mathbb{E}[a_i\, \delta\textnormal{a}_\ell\; (\delta\textnormal{v}_i\, \mathbf{W}_O) (\delta\textnormal{v}_\ell\, \mathbf{W}_O)^\top] \\
& = \sum_{i=1}^{T} \sum_{\ell=1}^{T} a_i\;\mathbb{E}_{\delta\textnormal{k}}[\delta\textnormal{a}_\ell]\;\mathbb{E}_{\delta\textnormal{v}}\bigl[(\delta\textnormal{v}_i\, \mathbf{W}_O)(\delta\textnormal{v}_\ell\, \mathbf{W}_O)^\top \bigr].
\end{align*}
The second equality holds, since $\delta\textnormal{a}_\ell$ depends only on $\{\delta\textnormal{k}_m\}$, which are independent of the value errors.
By token independence and $\mathbb{E}[\delta\textnormal{v}_i] = 0$, the value expectation vanishes for
$i \neq \ell$, so only the diagonal $i = \ell$ survives therefore we get
\[
\mathbb{E}\bigl[E_V E_\times^\top \bigr]
= \sum_{i=1}^{T} a_i\;\mathbb{E}_{\delta\textnormal{k}}[\delta\textnormal{a}_i]\;\mathbb{E}_{\delta\textnormal{v}}\bigl[\|\delta\textnormal{v}_i\, \mathbf{W}_O\|^2\bigr].
\]
It remains to evaluate $\mathbb{E}[\delta\textnormal{a}_i]$. From the expansion~\eqref{eq:softmax-2nd},
$\mathbb{E}[\delta\textnormal{a}_i^{(1)}] = a_i\,\mathbb{E}[\alpha_i - \bar\alpha] = 0$ because the $\alpha_i$ are
zero-mean (Assumption~\ref{ass:white-noise}). The leading nonzero contribution is the second-order
term, with the tail controlled by Lemma~\ref{lem:softmax-tail}:
\begin{align*}
\mathbb{E}[\delta\textnormal{a}_i] &= \mathbb{E}[\delta\textnormal{a}_i^{(2)}] + \mathbb{E}[\textnormal{r}_i],\\
\mathbb{E}[\delta\textnormal{a}_i^{(2)}]
&= \tfrac{1}{2}a_i\Bigl(\mathbb{E}[(\alpha_i - \bar\alpha)^2] - \textstyle\sum_\ell a_\ell\,\mathbb{E}[(\alpha_\ell - \bar\alpha)^2]\Bigr),
\end{align*}
where $|\mathbb{E}[\textnormal{r}_i]| \le \mathbb{E}|\textnormal{r}_i| \le 7 A_0^3$
by~\eqref{eq:tail-L1}. The centered second moments are computed exactly once here and reused in
(d) below. Using $\mathbb{E}[\alpha_\ell\bar\alpha] = a_\ell\,\mathbb{E}[\alpha_\ell^2]$ since, only the $m=\ell$
term of $\bar\alpha$ contributes, by token independence and zero mean, and
$\mathbb{E}[\bar\alpha^2] = \sum_m a_m^2\,\mathbb{E}[\alpha_m^2]$, we obtain
\begin{equation}\label{eq:centered-var}
\mathbb{E}\bigl[(\alpha_\ell - \bar\alpha)^2\bigr]
= (1-2a_\ell)\,\mathbb{E}[\alpha_\ell^2] + \sum_m a_m^2\,\mathbb{E}[\alpha_m^2]
\;\le\; 2\,\|q\|^2\sigma_K^2.
\end{equation}
Here, the final bound uses $\mathbb{E}[\alpha_m^2] \le \|q\|^2\sigma_K^2$
from~\eqref{eq:alpha-sq}, $1-2a_\ell \le 1$, and $\sum_m a_m^2 \le 1$. Hence
$|\mathbb{E}[\delta\textnormal{a}_i^{(2)}]| \le C\,a_i\,\|q\|^2\sigma_K^2$ and, in
particular, $|\mathbb{E}[\delta\textnormal{a}_i]| \le C\bigl(a_i \|q\|^2\sigma_K^2 + A_0^3\bigr)$.
Combining the above bound with
\begin{equation}\label{eq:value-factor-bound-step2c}
\mathbb{E}\bigl[\|\delta\textnormal{v}_i \mathbf{W}_O\|^2\bigr]
= \sum_c \mathbb{E}[(\delta\textnormal{v}_{i c})^2]\|(\mathbf{W}_O)_{c,:}\|^2 \le \sigma_V^2\|\mathbf{W}_O\|_F^2
\end{equation}
and the fact that the per-token
bound carries its own factor $a_i$, so that the token sum collapses through
$\sum_i a_i^2 \le \sum_i a_i = 1$, we obtain the final bound
\begin{align}\label{eq:EV-Ecross-bound}
\bigl|\mathbb{E}[E_V E_\times^\top]\bigr|
&\le \sum_i a_i\,\bigl|\mathbb{E}[\delta\textnormal{a}_i]\bigr|\,\sigma_V^2\|\mathbf{W}_O\|_F^2\\
&\le C\,\bigl(\|q\|^2\sigma_K^2 + A_0^3\bigr)\,\|\mathbf{W}_O\|_F^2\,\sigma_V^2 . \notag
\end{align}

\smallskip
\noindent\emph{(d) The term $\mathbb{E}[\|E_\times\|^2]$ is
$\mathcal{O}(\|q\|^2\|\mathbf{W}_O\|_F^2\,\sigma_K^2\sigma_V^2)$.}
Expanding the squared norm and using the independence of key and value noise,
\[
\mathbb{E}\bigl[\|E_\times\|^2\bigr]
= \sum_{i=1}^{T} \sum_{\ell=1}^{T} \mathbb{E}_{\delta\textnormal{k}}[\delta\textnormal{a}_i\, \delta\textnormal{a}_\ell]\;\mathbb{E}_{\delta\textnormal{v}}\bigl[(\delta\textnormal{v}_i\, \mathbf{W}_O) (\delta\textnormal{v}_\ell\, \mathbf{W}_O)^\top \bigr].
\]
For $i \neq \ell$, the value factor and the key factor both vanish by token independence and zero mean, leaving the
diagonal
\begin{equation}\label{eq:Ecross-squared}
\mathbb{E}\bigl[\|E_\times\|^2\bigr]
= \sum_{i=1}^{T} \mathbb{E}\bigl[\delta\textnormal{a}_i^2\bigr]\;\mathbb{E}\bigl[\|\delta\textnormal{v}_i\, \mathbf{W}_O\|^2\bigr].
\end{equation}
The value factor bound is already bounded in \eqref{eq:value-factor-bound-step2c}. Therefore, we only need to bound the key factor. Its leading part
$\delta\textnormal{a}_i^{(1)} = a_i(\alpha_i - \bar\alpha)$ given the bound in~\eqref{eq:centered-var} yields
\begin{equation*}
\mathbb{E}\bigl[(\delta\textnormal{a}_i^{(1)})^2\bigr]
= a_i^2\,\mathbb{E}\bigl[(\alpha_i - \bar\alpha)^2\bigr]
\;\le\; 2\,a_i^2\,\|q\|^2\,\sigma_K^2.
\end{equation*}
Squaring the full
$\delta\textnormal{a}_i = \delta\textnormal{a}_i^{(1)} + \delta\textnormal{a}_i^{(2)} + \textnormal{r}_i$ gives
\[
\mathbb{E}[\delta\textnormal{a}_i^2]
= \mathbb{E}[(\delta\textnormal{a}_i^{(1)})^2]
+ \underbrace{2\,\mathbb{E}[\delta\textnormal{a}_i^{(1)}(\delta\textnormal{a}_i^{(2)} + \textnormal{r}_i)] + \mathbb{E}[(\delta\textnormal{a}_i^{(2)} + \textnormal{r}_i)^2]}_{=: c_i}.
\]
The corrections $c_i$ are controlled pathwise. We have that $|\alpha_\ell - \bar\alpha| \le 2A_0$ and
$\sum_\ell a_\ell(\alpha_\ell - \bar\alpha)^2 \le 4A_0^2$ almost surely, the latter because it is a
convex combination of the terms $(\alpha_\ell - \bar\alpha)^2 \le 4A_0^2$. From those inequalities it follows directly,
from~\eqref{eq:softmax-2nd},
$|\delta\textnormal{a}_i^{(1)}| \le 2a_i A_0$ and $|\delta\textnormal{a}_i^{(2)}| \le 2a_i A_0^2$, while
$|\textnormal{r}_i| \le 7A_0^3$ and $\sum_i \mathbb{E}[\textnormal{r}_i^2] \le 49A_0^6$ by
Lemma~\ref{lem:softmax-tail}. Summing over tokens with $\sum_i a_i = 1$ and
$\sum_i a_i^2 \le 1$,
\[
\sum_i |c_i| \;\le\; 8A_0^3 + 28A_0^4 + 8A_0^4 + 98A_0^6 \;\le\; C\,A_0^3 ,
\]
the final inequality holding for $A_0 \le 1$ directly, and for $A_0 > 1$ because
$\sum_i \mathbb{E}[\delta\textnormal{a}_i^2] \le \mathbb{E}[(\sum_i|\delta\textnormal{a}_i|)^2] \le 4$ and
$\sum_i \mathbb{E}[(\delta\textnormal{a}_i^{(1)})^2] \le 2A_0^2$ give
$\sum_i |c_i| \le 4 + 2A_0^2 \le 6A_0^3$ outright.
Therefore, the bound on the attention distortion terms is
$\sum_i \mathbb{E}[\delta\textnormal{a}_i^2] \le C\bigl(\|q\|^2\sigma_K^2 +A_0^3\bigr)$.
Substituting this and the value factor into~\eqref{eq:Ecross-squared} and using
$\sum_i a_i^2 \le 1$,
\begin{equation}\label{eq:Ecross-final-bound}
\mathbb{E}\bigl[\|E_\times\|^2\bigr]
\;\le\; C\,\bigl(\|q\|^2\sigma_K^2 + A_0^3\bigr)\,\|\mathbf{W}_O\|_F^2\;\sigma_V^2 .
\end{equation}
\smallskip
Summarizing Step~2: in expectation the expansion~\eqref{eq:squared-expansion} retains only the
diagonal terms $\mathbb{E}[\|E_K\|^2]$ and $\mathbb{E}[\|E_V\|^2]$ together with a controlled
remainder. The cross-terms (a) and (b) vanish exactly at all orders, while (c) and (d) are
bounded by~\eqref{eq:EV-Ecross-bound} and~\eqref{eq:Ecross-final-bound}. Hence
\begin{align*}
\mathbb{E}\bigl[\|o - \hat{\textnormal{o}}\|^2\bigr]
&= \mathbb{E}\bigl[\|E_K\|^2\bigr] + \mathbb{E}\bigl[\|E_V\|^2\bigr] + R_2,\\
R_2 &:= \mathbb{E}\bigl[\|E_\times\|^2\bigr] + 2\,\mathbb{E}\bigl[E_V E_\times^\top\bigr] \notag\\
&= \mathcal{O}\Bigl(\bigl(\|q\|^2\sigma_K^2 + A_0^3\bigr)\|\mathbf{W}_O\|_F^2\,\sigma_V^2\Bigr).\notag
\end{align*}
\medskip
\noindent\textbf{Step 3: The key distortion term $\mathbb{E}[\|E_K\|^2]$.}
We insert the softmax expansion~\eqref{eq:softmax-2nd} into $E_K = \sum_i \delta\textnormal{a}_i\, v_i \mathbf{W}_O$.
By~\eqref{eq:zero-sum}, $\sum_i \delta\textnormal{a}_i = 0$, so for the constant vector $o$ we have
$\sum_i \delta\textnormal{a}_i\, o = \bigl(\sum_i \delta\textnormal{a}_i\bigr)o = 0$, subtracting this from $E_K$ lets us
recenter each $v_i \mathbf{W}_O$ by $o$ without changing the value.
Write $u_i := v_i \mathbf{W}_O - o$ and $D_o := \max_i \|u_i\|^2$. Grouping the softmax expansion as
$\delta\textnormal{a}_i = \delta\textnormal{a}_i^{(1)} + \delta\textnormal{a}_i^{(2)} + \textnormal{r}_i$
Lemma~\ref{lem:softmax-tail},
\begin{equation*}
E_K = \sum_{i=1}^{T} \delta\textnormal{a}_i\,u_i
= \sum_{i=1}^{T} \bigl(\delta\textnormal{a}_i^{(1)} + \delta\textnormal{a}_i^{(2)}\bigr)u_i
+ \underbrace{\sum_{i=1}^{T} \textnormal{r}_i\, u_i}_{=:\,E_K^{(3+)}} .
\end{equation*}
Writing $E_K^{(m)} := \sum_i \delta\textnormal{a}_i^{(m)} u_i$, so that  $E_K = E_K^{(1)} + E_K^{(2)} + E_K^{(3+)}$ exactly with 
 $E_K^{(3+)} := \sum_i \textnormal{r}_i u_i$ . We first evaluate the leading term exactly. Substituting
$\delta\textnormal{a}_i^{(1)} = a_i(\alpha_i - \bar\alpha)$ and using $\sum_i a_i u_i = \sum_i a_i (v_i \mathbf{W}_O - o)= \sum_i a_i v_i \mathbf{W}_O - o = o - o = 0$ (the middle step holds by $\sum_i a_i=1$), the mean-subtraction term drops out:
\begin{align*}
E_K^{(1)} &= \sum_i a_i(\alpha_i - \bar\alpha)\,u_i
= \sum_i a_i \alpha_i\, u_i - \bar\alpha\underbrace{\sum_i a_i u_i}_{=\,0}\\
&= \sum_i a_i \alpha_i\, u_i .
\end{align*}
Therefore, using $\mathbb{E}[\alpha_i\alpha_\ell] = \delta_{i\ell}\mathbb{E}[\alpha_i^2]$,
\begin{align}\label{eq:EK-final}
\mathbb{E}\bigl[\|E_K^{(1)}\|^2\bigr]
&= \sum_{i,\ell} a_i a_\ell\, \mathbb{E}[\alpha_i \alpha_\ell]\, u_i u_\ell^\top
= \sum_{i} a_i^2\, \mathbb{E}[\alpha_i^2]\, \|u_i\|^2 \notag\\
&= \sum_{i,c} a_i^2\, \|v_i \mathbf{W}_O - o\|^2\, q_c^2\, \mathbb{E}[(\delta\textnormal{k}_{i c})^2],
\end{align}
where the last step inserts $\mathbb{E}[\alpha_i^2] = \sum_c q_c^2\,\mathbb{E}[(\delta\textnormal{k}_{i c})^2]$
from~\eqref{eq:alpha-sq}. This is precisely $D_K$.
In particular, by $\mathbb{E}[\alpha_i^2]\leq \lVert q \rVert^2 \sigma_K^2$ and $\sum_i a_i^2 \leq 1$, $(\mathbb{E}\|E_K^{(1)}\|^2)^{1/2} \le \|q\|\sigma_K D_o^{1/2} \le A_0 D_o^{1/2}$.

Further we bound $E_K^{(3+)}$
 via the per-token root-mean-square bound of
Lemma~\ref{lem:softmax-tail}. By the triangle inequality in $L^2$,
$\mathbb{E}\|E_K^{(3+)}\|
\le \max_i\|u_i\|\sum_i \mathbb{E}|\textnormal{r}_i| \leq  7 D_o^{1/2}A_0^3$, hence
\begin{equation}\label{eq:EK3-norm}
\bigl(\mathbb{E}\bigl[\|E_K^{(3+)}\|^2\bigr]\bigr)^{1/2} \le 7D_o^{1/2}\,A_0^3.
\end{equation}
Writing $E_K = E_K^{(1)} + E_K^{(2)} + E_K^{(3+)}$, the squared norm expands as
\begin{equation}\label{eq:EK-sq-expand}
\mathbb{E}\bigl[\|E_K\|^2\bigr]
= \mathbb{E}\bigl[\|E_K^{(1)}\|^2\bigr]
\;+\; \underbrace{2\,\mathbb{E}\bigl[E_K^{(1)} E_K^{(2)\top}\bigr]}_{(\star)}
\;+\; R_K,
\end{equation}
\begin{align}\label{eq:RK-def}
R_K
:=& \mathbb{E}\bigl[\|E_K^{(2)}\|^2\bigr]
+ 2\,\mathbb{E}\bigl[E_K^{(1)} E_K^{(3+)\top}\bigr]\\
&+ 2\,\mathbb{E}\bigl[E_K^{(2)} E_K^{(3+)\top}\bigr]
+ \mathbb{E}\bigl[\|E_K^{(3+)}\|^2\bigr]. \notag
\end{align}
The three factors have $L^2$ norms of orders $A_0$, $A_0^2$, and $A_0^3$ respectively:
$(\mathbb{E}\|E_K^{(1)}\|^2)^{1/2} \le A_0 D_o^{1/2}$
by~\eqref{eq:EK-final},
$(\mathbb{E}\|E_K^{(2)}\|^2)^{1/2} \le \sum_i 2a_i A_0^2\|u_i\| \le 2A_0^2 D_o^{1/2}$ via the
pathwise bound $|\delta\textnormal{a}_i^{(2)}| \le 2a_i A_0^2$ and $\sum_i a_i = 1$ and
$(\mathbb{E}\|E_K^{(3+)}\|^2)^{1/2} \le 7A_0^3 D_o^{1/2}$ by~\eqref{eq:EK3-norm}. By
Cauchy-Schwarz, each cross term in~\eqref{eq:RK-def} is bounded by the product of the
corresponding norms, so term by term:
\begin{itemize}
\item $\mathbb{E}[\|E_K^{(2)}\|^2] \le 4\,D_o\,A_0^4$;
\item $\bigl|2\mathbb{E}[E_K^{(1)} E_K^{(3+)\top}]\bigr|
      \le 2\bigl(A_0 D_o^{1/2}\bigr)\bigl(7A_0^3 D_o^{1/2}\bigr)= 14D_oA_0^4$
\item $\bigl|2\,\mathbb{E}[E_K^{(2)} E_K^{(3+)\top}]\bigr| \le 28\,D_o\,A_0^5$
\item  $\mathbb{E}[\|E_K^{(3+)}\|^2] \le 49\,D_o\,A_0^6$
\end{itemize}
Combining the four bounds,
\begin{equation*}
|R_K| \;\le\; C\, D_o\, A_0^4 :
\end{equation*}
for $A_0 \le 1$ directly, absorbing $A_0^5 \le A_0^4$ and $A_0^6 \le A_0^4$; for $A_0 > 1$ via the trivial estimate
$\|E_K\| \le \sum_i |\delta\textnormal{a}_i|\,\|u_i\| \le 2 D_o^{1/2}$
(since $\sum_i|\delta\textnormal{a}_i| \le \sum_i(\hat{\textnormal{a}}_i + a_i) = 2$), which together with the norm bounds
above controls every term of~\eqref{eq:RK-def} by $C\,D_o\,A_0^4$ in that regime.
Now we evaluate the cross-term $(\star)$, since $E_K^{(1)} = \sum_i \delta\textnormal{a}_i^{(1)} u_i$ is linear and
$E_K^{(2)} = \sum_i \delta\textnormal{a}_i^{(2)} u_i$ quadratic in $\{\alpha_\ell\}$, the scalar
$E_K^{(1)} E_K^{(2)\top}$ is a homogeneous cubic in $\{\alpha_\ell\}$, and its expectation reduces to
third-order moments $\mathbb{E}[\alpha_p\alpha_q\alpha_r]$. Each such moment falls into one of
three exhaustive cases, all vanishing under Assumption~\ref{ass:white-noise}. Three distinct indices are $\mathbb{E}[\alpha_p \alpha_q \alpha_r]=0$ by mutual independence and zero mean,
one repeated index $\mathbb{E}[\alpha_p^2 \alpha_q]$ with $p \neq q$ factors as
      $\mathbb{E}[\alpha_p^2]\,\mathbb{E}[\alpha_q] = 0$ also by independence and zero mean.
A single cubed index, $\mathbb{E}[\alpha_p^3]$ vanishes by symmetry of $\alpha_p$ (established after~\eqref{eq:alpha-sq}). Hence
\begin{equation}\label{eq:cross-vanish}
(\star) \;=\; 2\,\mathbb{E}\bigl[E_K^{(1)} E_K^{(2)\top}\bigr] \;=\; 0,
\end{equation}
so the second-order correction does not pollute the leading $\mathcal{O}(\|q\|^2 D_o\,\sigma_K^2)$
distortion, and all second-order and higher contributions are absorbed into the remainder.
 
Together with~\eqref{eq:cross-vanish}, the expansion~\eqref{eq:EK-sq-expand} reads
$\mathbb{E}[\|E_K\|^2] = D_K + R_K$ with $|R_K| \le C\,D_o\,A_0^4$.

\medskip
\noindent\textbf{Step 4: The value distortion term $\mathbb{E}[\|E_V\|^2]$.}
The value term is exact and requires no expansion, as $E_V = \sum_i a_i\, \delta\textnormal{v}_i\, \mathbf{W}_O$ with
deterministic $a_i$. Thus
\[
\mathbb{E}\bigl[\|E_V\|^2\bigr]
= \sum_{i,\ell} a_i a_\ell\, \mathbb{E}\bigl[(\delta\textnormal{v}_i \mathbf{W}_O) (\delta\textnormal{v}_\ell \mathbf{W}_O)^\top \bigr].
\]
By token independence and $\mathbb{E}[\delta\textnormal{v}_i] = 0$, the $i \neq \ell$ terms vanish, leaving the
diagonal. Expanding $\delta\textnormal{v}_i \mathbf{W}_O = \sum_c \delta\textnormal{v}_{i c}\,(\mathbf{W}_O)_{c,:}$ and using channel
independence $\mathbb{E}[\delta\textnormal{v}_{i c}\,\delta\textnormal{v}_{i c'}] = \delta_{cc'}\,\mathbb{E}[(\delta\textnormal{v}_{i c})^2]$,
\begin{align*}
\mathbb{E}\bigl[\|E_V\|^2\bigr]
&= \sum_i a_i^2\,\mathbb{E}\Bigl[\bigl\|\textstyle\sum_c \delta\textnormal{v}_{i c}(\mathbf{W}_O)_{c,:}\bigr\|^2\Bigr]\\
&= \sum_{i}\sum_{c} a_i^2\, \mathbb{E}\bigl[(\delta\textnormal{v}_{i c})^2\bigr]\, \|\mathbf{W}_{O_c}\|_F^2,
\end{align*}
where $\mathbf{W}_{O_c} := (\mathbf{W}_O)_{c,:}$ is the $c$-th row of $\mathbf{W}_O$. This is precisely $D_V$.

\medskip
\noindent\textbf{Conclusion:}
Collecting Steps 2--4,
\begin{align*}
\mathbb{E}\bigl[\|o - \hat{\textnormal{o}}\|^2\bigr]
&= \underbrace{\mathbb{E}\bigl[\|E_K^{(1)}\|^2\bigr]}_{D_K}
+ \underbrace{\mathbb{E}\bigl[\|E_V\|^2\bigr]}_{D_V}
+ \underbrace{R_2}_{\text{2(c)+2(d)}}
  + \underbrace{R_K}_{\text{Step 3}},
\end{align*}
with $|R| = |R_2+R_K| \le C\|q\|^2\|\mathbf{W}_O\|_F^2\sigma_K^2\sigma_V^2+ C\bigl(A_0^3\|\mathbf{W}_O\|_F^2\sigma_V^2 + D_oA_0^4\bigr).$
\end{proof}
For the above proof, we need a bound on the remainder. For this, we first record the derivatives of the softmax along the ray used in the proof.
\begin{remark}[Softmax derivatives along the ray]
\label{rem:softmax-expansion}
We record the derivatives of the softmax along the ray $\theta \mapsto s + \theta\alpha$,
$\theta \in [0,1]$, from which the expansion~\eqref{eq:softmax-2nd} follows by Taylor's theorem.
With $a = \mathsf{softmax}(s)$, write
$\tilde{\textnormal{a}}_\ell(\theta) := \mathsf{softmax}(s+\theta\alpha)_\ell = a_\ell e^{\theta\alpha_\ell}/\mathsf{Z}(\theta)$,
$\mathsf{Z}(\theta) := \sum_\ell a_\ell e^{\theta\alpha_\ell}$, and
$
\bar\alpha_\theta := \sum_\ell \tilde{\textnormal{a}}_\ell(\theta)\,\alpha_\ell,$
$\widetilde{m}_2(\theta) := \sum_\ell \tilde{\textnormal{a}}_\ell(\theta)(\alpha_\ell-\bar\alpha_\theta)^2$,
$\widetilde{m}_3(\theta) := \sum_\ell \tilde{\textnormal{a}}_\ell(\theta)(\alpha_\ell-\bar\alpha_\theta)^3$
for the attention-weighted mean, variance, and third central moment of the perturbations under
the intermediate weights. Then $\mathsf{g}(\theta) := \tilde{\textnormal{a}}_i(\theta)$ satisfies
\begin{align}
\mathsf{g}'(\theta) &= \tilde{\textnormal{a}}_i(\theta)\,(\alpha_i - \bar\alpha_\theta), \nonumber\\
\mathsf{g}''(\theta) &= \tilde{\textnormal{a}}_i(\theta)\bigl[(\alpha_i-\bar\alpha_\theta)^2 - \widetilde{m}_2(\theta)\bigr], \nonumber\\
\mathsf{g}'''(\theta) &= \tilde{\textnormal{a}}_i(\theta)\Bigl[(\alpha_i-\bar\alpha_\theta)^3
- 3(\alpha_i-\bar\alpha_\theta)\,\widetilde{m}_2(\theta) - \widetilde{m}_3(\theta)\Bigr].
\label{eq:g3-ray}
\end{align}
Indeed, differentiating $\log\tilde{\textnormal{a}}_\ell(\theta) = \log a_\ell + \theta\alpha_\ell - \log \mathsf{Z}(\theta)$ with
$\frac{d}{d\theta}\log \mathsf{Z}(\theta) = \bar\alpha_\theta$ gives
$\tilde{\textnormal{a}}_\ell'(\theta) = \tilde{\textnormal{a}}_\ell(\theta)(\alpha_\ell - \bar\alpha_\theta)$; hence, using
$\sum_\ell \tilde{\textnormal{a}}_\ell(\theta)(\alpha_\ell - \bar\alpha_\theta) = 0$,
\begin{align*}
\bar\alpha_\theta' &= \sum_\ell \tilde{\textnormal{a}}_\ell'(\theta)\,\alpha_\ell = \widetilde{m}_2(\theta),\\
\widetilde{m}_2'(\theta)
&= \sum_\ell \tilde{\textnormal{a}}_\ell'(\theta)(\alpha_\ell-\bar\alpha_\theta)^2
  - 2\,\bar\alpha_\theta'\sum_\ell \tilde{\textnormal{a}}_\ell(\theta)(\alpha_\ell-\bar\alpha_\theta)\\
&= \widetilde{m}_3(\theta) ,
\end{align*}
and the three formulas follow the chain rule.

At $\theta = 0$ the weights reduce to the unperturbed $a$, so
$\bar\alpha_0 = \bar\alpha$, $\widetilde{m}_2(0) = \sum_{\ell} a_\ell\bigl(\alpha_\ell - \bar\alpha\bigr)^2$, and
$\mathsf{g}'(0) = a_i(\alpha_i - \bar\alpha) = \delta\textnormal{a}_i^{(1)},
\tfrac12\,\mathsf{g}''(0) = \tfrac12\,a_i\Bigl[(\alpha_i-\bar\alpha)^2 - \sum_{\ell} a_\ell\bigl(\alpha_\ell - \bar\alpha\bigr)^2\Bigr]
= \delta\textnormal{a}_i^{(2)}$
the first- and second-order terms of~\eqref{eq:softmax-2nd} are exactly the first two Taylor
coefficients of $\mathsf{g}$ at $\theta = 0$, and $\textnormal{r}_i$ is the exact second-order Taylor remainder of $\mathsf{g}$,
as used in Lemma~\ref{lem:softmax-tail}. Summing over $i$ gives
$\sum_i \delta\textnormal{a}_i^{(1)} = \sum_i \delta\textnormal{a}_i^{(2)} = 0$, consistent with~\eqref{eq:zero-sum}.
\end{remark}
Given the above remark, let us now state and prove the following lemma, providing the softmax tail bound.
\begin{lemma}[Softmax tail bound]\label{lem:softmax-tail}
Let $\textnormal{r}_i := \delta\textnormal{a}_i - \delta\textnormal{a}_i^{(1)} - \delta\textnormal{a}_i^{(2)}$ denote the exact remainder of the
softmax response past second order, as in~\eqref{eq:softmax-2nd}. Under
Assumptions~\ref{ass:white-noise} and~\ref{ass:bounded}, for every token $i$,
\begin{equation}\label{eq:tail-as}
|\textnormal{r}_i| \le 7\,A_0^{3} \quad\text{for every token } i
\text{ and }
\sum_i |\textnormal{r}_i| \le 7\,A_0^{3}
\end{equation}
and consequently, for any deterministic row vectors $\{u_i\}$,
\begin{align}\label{eq:tail-L1}
\mathbb{E}\bigl[|\textnormal{r}_i|\bigr] &\le 7\,A_0^{3}, \
\sum_i \mathbb{E}\bigl[\textnormal{r}_i^2\bigr] \le 49\,A_0^{6},\\
\Bigl\|\sum_i \textnormal{r}_i\,u_i\Bigr\| &\le 7\,\max_i\|u_i\|\,A_0^{3}\ \text{a.s.}. \notag
\end{align}
\end{lemma}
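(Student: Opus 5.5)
The plan is to treat $\textnormal{r}_i$ as the exact second-order Taylor remainder of $\mathsf{g}_i(\theta) = \tilde{\textnormal{a}}_i(\theta) = \mathsf{softmax}(s+\theta\alpha)_i$ on $[0,1]$, as set up in Remark~\ref{rem:softmax-expansion}. I will bound the third derivative \eqref{eq:g3-ray} pathwise, using only the almost-sure bound $|\alpha_\ell| \le A_0$ from Assumption~\ref{ass:bounded}. The white-noise assumption is not needed for this lemma beyond the definitions.

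First I bound the ingredients of $\mathsf{g}'''$. For every $\theta\in[0,1]$, $\bar\alpha_\theta$ is a convex combination of the $\alpha_\ell$, since the $\tilde{\textnormal{a}}_\ell(\theta)$ are nonnegative and sum to one. Hence $|\bar\alpha_\theta|\le A_0$ and $|\alpha_\ell-\bar\alpha_\theta|\le 2A_0$ for all $\ell$. Consequently $0\le \widetilde m_2(\theta)\le 4A_0^2$ and $|\widetilde m_3(\theta)|\le 8A_0^3$, again as convex combinations. Plugging these into \eqref{eq:g3-ray} gives
\[
|\mathsf{g}_i'''(\theta)| \le \tilde{\textnormal{a}}_i(\theta)\bigl(8 + 3\cdot 2\cdot 4 + 8\bigr)A_0^3 = 40\,\tilde{\textnormal{a}}_i(\theta)\,A_0^3 .
\]

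Next I use the integral form of the remainder rather than the Lagrange form. The reason is that the Lagrange form gives an intermediate point $\xi_i$ depending on $i$, which would spoil summation over tokens. In integral form,
\[
\textnormal{r}_i = \tfrac12\int_0^1 (1-\theta)^2\,\mathsf{g}_i'''(\theta)\,d\theta .
\]
For the per-token bound, $\tilde{\textnormal{a}}_i(\theta)\le 1$ and $\int_0^1(1-\theta)^2d\theta = 1/3$ give $|\textnormal{r}_i|\le \tfrac{40}{6}A_0^3\le 7A_0^3$. For the summed bound, I exchange the sum and the integral and use $\sum_i \tilde{\textnormal{a}}_i(\theta)=1$ for every $\theta$. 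This yields $\sum_i|\textnormal{r}_i| \le \tfrac12\int_0^1(1-\theta)^2\cdot 40A_0^3\,d\theta \le 7A_0^3$, which is \eqref{eq:tail-as}. The summation is the one delicate point, and the integral form is exactly what makes it work. A naive Lagrange argument would only give $7TA_0^3$.

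The consequences \eqref{eq:tail-L1} then follow pathwise. First, $\mathbb{E}|\textnormal{r}_i|\le 7A_0^3$ is immediate from the almost-sure bound. Second, $\sum_i \textnormal{r}_i^2 \le (\sum_i|\textnormal{r}_i|)^2\le 49A_0^6$ almost surely, so the same holds in expectation. Third, by the triangle inequality, $\|\sum_i \textnormal{r}_i u_i\|\le \max_i\|u_i\|\sum_i|\textnormal{r}_i|\le 7\max_i\|u_i\|A_0^3$ almost surely for deterministic $u_i$.
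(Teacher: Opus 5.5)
Your proposal is correct and follows essentially the same route as the paper: the integral form of the second-order Taylor remainder of $\mathsf{g}_i(\theta)=\mathsf{softmax}(s+\theta\alpha)_i$, then the pathwise bound $|\mathsf{g}_i'''(\theta)|\le 40\,\tilde{\textnormal{a}}_i(\theta)\,A_0^3$ from the convex-combination estimates, then $\sum_i\tilde{\textnormal{a}}_i(\theta)=1$ to obtain the summed bound. The only cosmetic difference is that you bound $\sum_i \textnormal{r}_i^2$ by $(\sum_i|\textnormal{r}_i|)^2$, whereas the paper uses $\max_i|\textnormal{r}_i|\cdot\sum_i|\textnormal{r}_i|$; both give $49A_0^6$.
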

\begin{proof}[Proof of Lemma \ref{lem:softmax-tail}]
Fix a token $i$ and set, for $\theta\in[0,1]$,
\begin{align*}
\mathsf{g}(\theta) &:= \mathsf{softmax}(s+\theta\alpha)_i,\\
\tilde{\textnormal{a}}_\ell(\theta) &:= \mathsf{softmax}(s+\theta\alpha)_\ell
= \frac{a_\ell e^{\theta\alpha_\ell}}{\mathsf{Z}(\theta)}, \
\mathsf{Z}(\theta) := \sum_\ell a_\ell e^{\theta\alpha_\ell},
\end{align*}
and write $\bar\alpha_\theta := \sum_\ell \tilde{\textnormal{a}}_\ell(\theta)\alpha_\ell$,
$\widetilde{m}_2(\theta) := \sum_\ell \tilde{\textnormal{a}}_\ell(\theta)(\alpha_\ell-\bar\alpha_\theta)^2$,
$\widetilde{m}_3(\theta) := \sum_\ell \tilde{\textnormal{a}}_\ell(\theta)(\alpha_\ell-\bar\alpha_\theta)^3$, the
attention-weighted mean, variance, and third central moment of the perturbations under the
intermediate weights. At $\theta=0$ these reduce to $\bar\alpha$ and $\widetilde{m}_2(0)$. By the ray derivatives recorded in
Remark~\ref{rem:softmax-expansion}, equation~\eqref{eq:g3-ray},
\[
\mathsf{g}'''(\theta) = \tilde{\textnormal{a}}_i(\theta)\Bigl[(\alpha_i-\bar\alpha_\theta)^3
 - 3(\alpha_i-\bar\alpha_\theta)\,\widetilde{m}_2(\theta) - \widetilde{m}_3(\theta)\Bigr].
\]
Since
$\mathsf{g}\in C^3([0,1])$ with $\mathsf{g}(0)=a_i$, $\mathsf{g}(1)=\hat{\textnormal{a}}_i$, $\mathsf{g}'(0)=\delta\textnormal{a}_i^{(1)}$, and
$\tfrac12 \mathsf{g}''(0)=\delta\textnormal{a}_i^{(2)}$, Taylor's theorem with integral remainder (see, e.g.,
\cite{zorich2004mathematical1}) gives the exact identity

\begin{equation}\label{eq:int-remainder}
\textnormal{r}_i \;=\; \frac12\int_0^1 (1-\theta)^2\, \mathsf{g}'''(\theta)\,d\theta .
\end{equation}

Now bound $\mathsf{g}'''$ pathwise.  $|\alpha_\ell| = |q^\top \delta\textnormal{k}_\ell| \le \sum_c |q_c|\,|\delta\textnormal{k}_{\ell c}| \le A_0$ almost surely for every
$\ell$. Both $\bar\alpha_\theta$ and $\bar\alpha$ are convex
combinations of the $\alpha_\ell$, so $|\bar\alpha_\theta| \le A_0$ and $|\bar\alpha| \le A_0$, hence
$|\alpha_i - \bar\alpha_\theta| \le 2A_0$,
$\widetilde{m}_2(\theta) \le (2A_0)^2 = 4A_0^2$,
$|\widetilde{m}_3(\theta)| \le (2A_0)^3 = 8A_0^3$,
the last two because $\widetilde{m}_2(\theta)$ and $\widetilde{m}_3(\theta)$ are convex
combinations of terms bounded by $4A_0^2$ and $8A_0^3$ respectively. The bracket in $\mathsf{g}'''(\theta)$
is therefore at most $8A_0^3 + 3\cdot 2A_0\cdot 4A_0^2 + 8A_0^3 = 40\,A_0^3$, uniformly in
$\theta$ and in the token. For the prefactor, $\tilde{\textnormal{a}}(\theta)$ is a probability vector, so $\tilde{\textnormal{a}}_i(\theta) \le 1$ per token and
$\sum_i \tilde{\textnormal{a}}_i(\theta) = 1$ in aggregate. Writing $\mathsf{g}_i$ for the response of token $i$, this
gives, almost surely and uniformly in $\theta\in[0,1]$,
\[
|\mathsf{g}_i'''(\theta)| \le 40 A_0^3
\text{ and }
\sum_i |\mathsf{g}_i'''(\theta)| \le 40\,A_0^3\sum_i \tilde{\textnormal{a}}_i(\theta) = 40A_0^3 .
\]
Integrating~\eqref{eq:int-remainder} with $\int_0^1(1-\theta)^2 d\theta = \tfrac13$ yields both
parts of~\eqref{eq:tail-as}: $|\textnormal{r}_i| \le \tfrac{20}{3}A_0^3 \le 7A_0^3$, and
$\sum_i |\textnormal{r}_i| \le \tfrac12\int_0^1(1-\theta)^2 \sum_i|\mathsf{g}_i'''(\theta)|\,d\theta \le 7A_0^3$. The
consequences~\eqref{eq:tail-L1} follow: an almost-sure bound dominates the mean
$\sum_i \mathbb{E}[\textnormal{r}_i^2] \le \mathbb{E}\bigl[\max_i|\textnormal{r}_i| \cdot \sum_i|\textnormal{r}_i|\bigr] \le 49A_0^6$
and $\|\sum_i \textnormal{r}_i u_i\| \le \max_i\|u_i\|\sum_i|\textnormal{r}_i|$. Assumption~\ref{ass:bounded} enters only
through the pathwise bound $|\alpha_\ell| \le A_0$.
\end{proof}

\subsection{Derivation of AATC via Waterfilling}
\label{sec:derivation-aatc-waterfillling-appendix}
Next, we derive the distortion of AATC given the transform from Section \ref{subsec:whitening}. In the decorrelated basis the hypotheses of
Theorem~\ref{thm:distortion} holds by construction: each transformed channel is
quantized independently by its own non-overloading $b_{\ell,c}$-bit uniform
quantizer, so the errors $\delta H_{ic}=\delta \hat H_{ic}-H_{ic}$ are zero-mean, symmetric, independent
across tokens and channels, and bounded, with
\begin{equation}\label{eq:noise-model}
\mathbb{E}\bigl[(\delta H_{ic})^2\bigr]
= \frac{R_{\ell,c}^2}{12\cdot 2^{2 b_{\ell,c}}}
= \gamma\,(\sigma_{\ell,c})^2\, 2^{-2 b_{\ell,c}} ,
\end{equation}
where $\sigma_{\ell,c}$ is the standard deviation of transformed channel $c$
(equal to $\sqrt{S_c}$ by the whitening), and $\gamma$ absorbs the
range-to-deviation ratio $R_{\ell,c} \propto \sigma_{\ell,c}$. Since $\gamma$ is common to
all channels, it drops out of the allocation. Instantiating
Theorem~\ref{thm:distortion} in this basis  (effective query
$\tilde q := \mathbf{B}^K q$, output metric
$\widetilde M := \mathbf{B}^V \mathbf{W}_O \mathbf{W}_O^\top (\mathbf{B}^V)^\top$)
gives $D_K = \sum_{i,c} a_i^2\|u_i\|^2\,\tilde q_c^{\,2}\,
\mathbb{E}[(\delta H_{ic})^2]$ and
$D_V = \sum_{i,c} a_i^2\,\mathbb{E}[(\delta H_{ic})^2]\,\widetilde M_{cc}$.
Since attention weights are unavailable at compression time, we allocate
against the attention-agnostic surrogate obtained by dropping the
token-dependent factors and averaging $\tilde q_c^{\,2}$ over calibration
queries:
\begin{align}\label{eq:key-distortion-channel}
    D_K^{\ell} &=  \sum_{c=1}^{d_k}  w_{\ell,c}^{(K)} (\sigma_{\ell,c}^{(K)})^2 \cdot 2^{-2b_{\ell,c}},\\
    \notag
    \text{  with  } w_{\ell,c}^{(K)} &= \frac{1}{T}  \sum_{i=1}^T ( (\mathbf{B}^{K}_{\ell,c})^{\top} q_{( i,\ell,c)})^2
\end{align}
a per-channel weight with $i$ being the tokens from the calibration set and $T$ the size of the calibration set.
The weight combines the query alignment and the reconstruction projection. $\sigma_{\ell,c}^{(K)}$ is the variance of the channel and $2^{-2b_{\ell,c}}$ captures the distortion at $b_{\ell,c}$ bits.
\begin{remark}
    For grouped query attention in which multiple query heads attend to one key head we need to sum over all the queries that attend to this head in $w_{\ell,c}^{(K)}$, resulting in  $\ w_{\ell,c}^{(K)} = \frac{1}{T} \sum_{q_{head}} \sum_{i=1}^T ( (\mathbf{B}^{K}_{\ell,c})^{\top} q_{(q_{head}, i,\ell,c)})^2$
\end{remark}
The value distortion takes the analogous form
\begin{align}\label{eq:value-distortion-channel}
    D_V^{\ell} =  \sum_{c=1}^{d_v}   w_{\ell,c}^{(V)} (\sigma_{\ell,c}^{(V)})^2 \cdot  2^{-2b_{\ell,c}}
\end{align}
 with  $w_{\ell,c}^{(V)} = \lVert (\mathbf{B}^V \mathbf{W}_O)_{\ell,c}^\top \rVert_F^2$.
This is the formula from Theorem \ref{thm:distortion} with the transform plugged into it.
The above defines the distortions per-layer. Overall, we can obtain the global distortions as
\[
D^{global}_K= \sum_{\ell=1}^L D_K^{\ell} ; \;  \; D^{global}_V= \sum_{\ell=1}^L D_V^{\ell}.
\]
Given the above, we can formulate the waterfilling problem as in Section \ref{subsubsec:bit-allocation}.

\subsection{Mapping Related Work onto the Distortion Formula}
\label{app:mapping-distortion}
\subsubsection{Token-Level Attention Weight}
\label{subsec:attention-weight}
The factor $a_i^2$ captures how strongly token $i$ contributes to the attention output. Tokens with vanishing attention weights contribute negligibly to $D$ no matter how their keys and values are quantized, so the natural way to exploit this is to drop such tokens entirely rather than spend bits on them. In the factorization of Eq.~\ref{eq:factorized-distortion}, $a_i^2$ sits inside the token-dependent factor, so methods that exploit it operate orthogonally to the channel-wise allocation we develop here. Token eviction methods take exactly this route.

The earliest of them use attention magnitude as the eviction criterion: StreamingLLM \cite{xiao2023efficient} preserves the first tokens together with a sliding window of recent tokens, H2O \cite{zhang2023h2o} retains "heavy hitter" tokens identified by accumulated attention scores, SnapKV \cite{li2024snapkv} selects which tokens to keep during prefill from recent attention patterns. A more recent line of work refines this criterion with value-side information: VATP \cite{guo2024attention} take into account attention scores and value $\ell_1$-norms for token eviction, CriticalKV \cite{feng2025identify} introduces a method evicting tokens based on a projected value norm $\lVert V_iW_O \rVert_1$ combined with the attention scores and Expected Attention \cite{devoto2025expected} uses $a_i \|W_O v_i\|$ as each KV pair's residual-stream contribution by approximating $a_i$ through an expected attention score. All three target the contribution magnitude $a_i \|v_i W_O\|$, the size of token 
$i$'s direct contribution to the output. This ignores the attention mass that would redistribute to the remaining tokens if $i$ were evicted. The full leading-order eviction cost includes that redistribution and takes the form $a_i \|v_i W_O - o\|$, which CAOTE \cite{goel2025caote} almost computes in closed form, will be discussed in Subsection~\ref{subsec:value-relevance}.

\subsubsection{Token-Level Output Relevance}
~\label{subsec:value-relevance}
The factor $\|v_i W_O - o\|^2$ measures how much token
$i$'s value, after the output projection, deviates from the attention output.  It arises in $D_K$
 as the sensitivity of the output to perturbations in attention allocation caused by key quantization. If $v_i W_O \approx o$, then key errors at this token are forgiving because moving attention onto or off token $i$ does not change $o$ much. Like $a_i^2$, this factor lives in the token-dependent part of the factorization (Eq.~\ref{eq:factorized-distortion}) and would naturally belong to a token-wise allocation rather than the channel-wise one we address.
The closest precedent in prior KV compression work is CAOTE~\cite{goel2025caote}, which defines token eviction error as the change in attention output upon token removal and derives in closed form that this error has the structure of a residual relative to the attention-weighted average of the value vectors. CAOTE's quantity differs from that in the attention-aware distortion formula by operating in value space rather than residual-stream space (i.e., before the output projection $W_O$). The structural insight is, however, shared. What matters is how different a token's value contribution is from the rest of the output, not just how large its raw contribution is. The other value-aware eviction methods discussed in Section~\ref{subsec:attention-weight} instead target the contribution magnitude $a_i \|v_i W_O\|$, a related quantity that never references $o$ and so misses this distinction.

\subsubsection{Query-Aware Precision}

The factor $q_c^2$ in $D_K$ motivates allocating precision to the key directions that queries read from strongly. A²ATS \cite{he2025a2ats} derives the attention score error component and uses it to motivate query-aware vector quantization. MixKVQ \cite{zhang2026mixkvq} uses query relevance for mixed-precision allocation. SQuat \cite{wang2025squat} projects the key quantization error onto the orthogonal complement of the query subspace, so that the retained error lies in directions queries do not read from; this is query-awareness through the error geometry rather than through bit allocation. OSCAR \cite{zhou2026oscar} addresses the factor through rotation: working with the full query covariance $C_Q = Q^\top Q$ rather than a per-channel energy, it shows under a frozen-error surrogate (post-rotation residual covariance) that the eigendecomposition of $C_Q$ minimizes the surrogate (their Theorem 1). OSCAR composes this eigenrotation with a Hadamard transform and bit-reversal permutation, then applies uniform INT2 in the resulting basis.
Our method takes a different route to the same goal on the key side. The rotation we use is obtained by whitening the cached activations and is not query-aware. The query signal enters through the bit allocation instead, with channels carrying more query energy receiving more bits. These two routes, query-awareness through the rotation versus through the allocation, coincide in their treatment of the query signal. Each, on its own, already directs precision toward the directions queries read from, so along that axis, applying both double-counts it. However, one could combine OSCAR's query-aware rotation with channel-wise bit allocation.

\subsubsection{Output-Projection-Aware Value Precision}

The factor $W_{O_c}$, the column norm of the output projection at value channel $c$, appears in $D_V$ as a multiplier on the value quantization error in that channel. A value channel that the output projection reads from with a large weight has its quantization error amplified at the output; a channel with a small projection weight is forgiving. The decomposition, therefore, motivates allocating value precision in proportion to $W_{O_c}$ (or a norm-based aggregate over the output dimensions). To our knowledge, no prior method exposes $W_{O_c}$ as an explicit allocation factor.
$W_O$ enters KVQuant's Fisher sensitivity only implicitly, through the squared gradient of the language-modeling loss with respect to the value activations, $J(V) = \partial L/\partial V$, where $W_O$ appears entangled with the attention scores and the rest of the downstream gradient. Our criterion instead names $W_{O_c}$ as an explicit, closed-form structural term.
OSCAR's value rotation uses the score-weighted value covariance $V^\top A^\top A V$, which contains no factor from the output projection. Our allocation criterion incorporates $W_{O_c}$ directly, as a structural term derived from the decomposition which remains fixed for every layer, rather than a quantity inferred through calibration.

\subsubsection{Per-Channel Sensitivity}

The terms $\mathbb{E}[(\delta k_{ic})^2]$ and $\mathbb{E}[(\delta v_{ic})^2]$ capture how much quantization error each channel produces under a given quantizer. KIVI \cite{liu2024kivi} addresses this by per-channel asymmetric scaling for keys and per-token scaling for values. KVQuant \cite{hooper2024kvquant} additionally employs sensitivity-weighted non-uniform quantization. PALU \cite{chang2024palu} applies SVD-based low-rank truncation. RateQuant \cite{zuo2026ratequant} is the closest to our approach. It applies rate-distortion theory to mixed-precision allocation, fitting a distortion-rate curve $D(b)=\alpha \beta^{- b}$ per component from calibration data and solving via reverse waterfilling. The substantive difference is not granularity (RateQuant operates at the (head, K-or-V) level but is granularity-agnostic in principle), but what the framework makes explicit. RateQuant's sensitivity weights are squared gradient norms and its distortion curve is a parametric model whose parameters $(\alpha,\beta)$ are fit empirically from MSE at a handful of precisions. Query-awareness, the output-projection weights $W_{O_c}$ and the attention-score weighting $a_i$ never appear. They are bundled invisibly into the fitted gradient norms and curve parameters. Our decomposition surfaces these factors explicitly and ties each to a distinct design axis, deriving in closed form the structure that RateQuant leaves implicit in its empirical fits.

\subsubsection{Synthesis (Our Method)}

The four channel-level factors in the decomposition, namely $q_c^2$ and $\mathbb{E}[(\delta k_{ic})^2]$  on the key side as well as $W_{O_c}$ and $\mathbb{E}[(\delta v_{ic})^2]$ on the value-side, together determine each channel's contribution to $D$. Our method allocates non-uniform precisions across channels using a criterion that combines these factors. The rotation is calibration-derived from the cached activations and is shared across the bit-allocation step. We do not optimize it separately for query awareness, since the bit allocation already directs precision to the channels that matter most according to the attention statistics.

Each row of the table above is addressed by an existing method through an existing mechanism. Our contribution is to address the channel-level rows \textit{jointly} through a single allocation criterion derived from the decomposition. Rather than picking one factor and optimizing for it via a single mechanism, we use the decomposition to set precisions that respond to all channel-level factors simultaneously. Our precise method is described in Section \ref{sec:method-channel-allocation}.

\subsection{Comparison of SVD and Whitening}
\label{subsec:svdequalwhitening}
KVTC \cite{staniszewski2025kv} derives its transform from the PCA of $\mathbf{K} := \mathbf{X}\mathbf{W} $, whereas we use a whitening-based SVD of the projection $\mathbf{W}$. We show that these two constructions yield the same transform. The equivalence is confined to this decorrelation step: KVTC decorrelates a global cross-layer cache and allocates on reconstruction error, while we decorrelate per layer and allocate on the output-aware distortion of Theorem \ref{thm:distortion}.
\begin{proposition}[Equivalence of Whitening-Based SVD and PCA of $\mathbf{X}\mathbf{W}$]
Let $\mathbf{X} \in \mathbb{R}^{n \times d_{\mathrm{model}}}$ be calibration activations,
$\mathbf{W} \in \mathbb{R}^{\mathrm{model} \times d_k}$ a linear projection matrix,
and define $\mathbf{K} := \mathbf{X}\mathbf{W} \in \mathbb{R}^{n \times d_k}$.
Let the empirical input covariance be
$\mathbf{\Sigma}_X := \frac{1}{n} \mathbf{X}^\top \mathbf{X}$
and assume $\mathbf{\Sigma}_X$ is symmetric positive definite.
Define the whitened weight matrix
$\mathbf{\widetilde W} := \mathbf{\Sigma}_X^{1/2} \mathbf{W}$.
Let $\mathbf{\widetilde W} = \mathbf{U} \mathbf{S} \mathbf{P}^\top $ be its singular value decomposition.
Then the following statements hold:
\begin{enumerate}
    \item The right singular vectors $\mathbf{P}$ coincide with the right singular vectors of $\mathbf{K}$.
    \item The transform coding coordinates obtained via $\mathbf{K}\mathbf{P}$ are identical to
    $\mathbf{X} \mathbf{\Sigma}_X^{-1/2} \mathbf{U} \mathbf{S}$.
\end{enumerate}
\end{proposition}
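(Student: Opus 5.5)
The plan is to compare the Gram matrices of $\mathbf{K}$ and $\widetilde{\mathbf{W}}$. The right singular vectors of any matrix $\mathbf{M}$ are the eigenvectors of $\mathbf{M}^\top\mathbf{M}$. So for part 1 it suffices to show that $\mathbf{K}^\top\mathbf{K}$ and $\widetilde{\mathbf{W}}^\top\widetilde{\mathbf{W}}$ agree up to a positive scalar.

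\textbf{Part 1.} Compute directly:
\[
\mathbf{K}^\top\mathbf{K}=\mathbf{W}^\top\mathbf{X}^\top\mathbf{X}\mathbf{W}=n\,\mathbf{W}^\top\boldsymbol{\Sigma}_X\mathbf{W}=n\,\mathbf{W}^\top\boldsymbol{\Sigma}_X^{1/2}\boldsymbol{\Sigma}_X^{1/2}\mathbf{W}=n\,\widetilde{\mathbf{W}}^\top\widetilde{\mathbf{W}}.
\]
This uses that the symmetric square root $\boldsymbol{\Sigma}_X^{1/2}$ is symmetric. Substituting the SVD gives $\mathbf{K}^\top\mathbf{K}=\mathbf{P}(n\mathbf{S}^2)\mathbf{P}^\top$. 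This is an eigendecomposition, so $\mathbf{P}$ consists of right singular vectors of $\mathbf{K}$, with singular values $\sqrt{n}\,\mathbf{S}$. Hence the PCA of $\mathbf{K}$ (uncentered, as in the paper's zero-mean convention) uses the basis $\mathbf{P}$.

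The main obstacle is the non-uniqueness of singular vectors. When singular values repeat, the eigenspaces are only determined up to rotation, and each vector only up to sign. I would state part 1 as: $\mathbf{P}$ is a valid choice of right singular vectors of $\mathbf{K}$, and it is unique up to sign, or up to orthogonal mixing within each eigenspace, when the spectrum has repeated values. That is the precise sense of ``coincide''.

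\textbf{Part 2.} Use the SVD to write $\mathbf{W}=\boldsymbol{\Sigma}_X^{-1/2}\mathbf{U}\mathbf{S}\mathbf{P}^\top$, which is valid because $\boldsymbol{\Sigma}_X$ is positive definite. Then
\[
\mathbf{K}\mathbf{P}=\mathbf{X}\boldsymbol{\Sigma}_X^{-1/2}\mathbf{U}\mathbf{S}\mathbf{P}^\top\mathbf{P}=\mathbf{X}\boldsymbol{\Sigma}_X^{-1/2}\mathbf{U}\mathbf{S},
\]
since $\mathbf{P}^\top\mathbf{P}=\mathbf{I}$ for the reduced SVD.

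As a closing remark I would connect this to the paper's Cholesky construction. There $\boldsymbol{\Sigma}_X=\mathbf{L}\mathbf{L}^\top$, and $\mathbf{L}^\top=\mathbf{O}\,\boldsymbol{\Sigma}_X^{1/2}$ for some orthogonal $\mathbf{O}$. Therefore $\mathbf{L}^\top\mathbf{W}=\mathbf{O}\widetilde{\mathbf{W}}$ has the same $\mathbf{S}$ and $\mathbf{P}$ as $\widetilde{\mathbf{W}}$, with left singular vectors $\mathbf{O}\mathbf{U}$. So the statement also covers the factorization in \eqref{eq:AB-factorization}, where the coordinates $\mathbf{H}=\mathbf{X}\mathbf{A}$ equal $\mathbf{K}\mathbf{P}\mathbf{S}^{-1/2}$, a diagonal rescaling of the PCA coordinates.
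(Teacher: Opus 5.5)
Your proof is correct and follows the paper's argument essentially verbatim: you prove part 1 via the Gram identity $\mathbf{K}^\top\mathbf{K} = n\,\widetilde{\mathbf{W}}^\top\widetilde{\mathbf{W}}$, and part 2 by substituting $\mathbf{W} = \boldsymbol{\Sigma}_X^{-1/2}\mathbf{U}\mathbf{S}\mathbf{P}^\top$ and using $\mathbf{P}^\top\mathbf{P} = \mathbf{I}$. Your caveat about sign and rotation ambiguity under repeated singular values, and your Cholesky remark giving $\mathbf{H} = \mathbf{K}\mathbf{P}\mathbf{S}^{-1/2}$ (valid where $\mathbf{S}$ is invertible), are correct refinements that the paper either glosses over or states only informally in the remark after the proof.
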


\begin{proof}
First observe that
\[
\mathbf{K}^\top \mathbf{K}
=
\mathbf{W}^\top \mathbf{X}^\top \mathbf{X} \mathbf{W}
=
n\, \mathbf{W}^\top \mathbf{\Sigma}_X \mathbf{W}.
\]
On the other hand,
$\ \mathbf{\widetilde W}^\top\mathbf{\widetilde W}
=
\mathbf{W}^\top \mathbf{\Sigma}_X \mathbf{W}.$
Therefore,
$\mathbf{K}^\top \mathbf{K} =
n\,  \mathbf{\widetilde W}^\top \mathbf{\widetilde W}.$
Since multiplication by the positive scalar $n$ does not change eigenvectors,
the right singular vectors of $\mathbf{K}$ are identical to those of $ \mathbf{\widetilde W}$.
Hence, both SVDs yield the same matrix $\mathbf{P}$ of right singular vectors.

Next, express $\mathbf{W}$ using the SVD of $ \mathbf{\widetilde W}$:
$
\mathbf{W}
=
\mathbf{\Sigma}_X^{-1/2} \mathbf{U} \mathbf{S} \mathbf{P}^\top.$
Substituting into $\mathbf{K} = \mathbf{X}\mathbf{W}$ gives
$\mathbf{K}=\mathbf{X} \mathbf{\Sigma}_X^{-1/2} \mathbf{U} \mathbf{S} \mathbf{P}^\top.$
Right-multiplying by $\mathbf{P}$ yields
$\mathbf{K} \mathbf{P}
=
\mathbf{X} \mathbf{\Sigma}_X^{-1/2} \mathbf{U} \mathbf{S}. $
Thus the transform coding coordinates obtained by
right-multiplying $\mathbf{K}$ with $\mathbf{P}$
are identical to those obtained via
$\mathbf{X} \mathbf{\Sigma}_X^{-1/2} \mathbf{U} \mathbf{S}$.
\end{proof}
\begin{remark}
First, the same holds for the values. Second, our method uses $\mathbf{U}\sqrt{\mathbf{S}}$ rather than $\mathbf{U}\mathbf{S}$, so even the transform-coding coordinates differ by the factor $\sqrt{\mathbf{S}}$, with $\mathbf{S}$ the singular values.
\end{remark}

\section{Experiment Setup}

\subsection{Experimental Setting for KV Cache Quantization Methods}
\label{subsec:compare-methods}
Let us explain in more detail the experimental settings we used for the different KV cache quantization methods appearing in Section \ref{sec:experiments}.

\paragraph{KIVI}
We use  KIVI \cite{liu2024kivi} as in its original implementation, quantizing keys channel-wise and values token-wise with recent-token window of size $128$ and a scaling group-size of $32$. We also keep the first $4$ tokens in full precision.

\paragraph{PALU}
For PALU \cite{chang2024palu}, we apply their Fisher-uniform implementation with head group size $4$ and the decompose-method whiten. For the low-rank part we establish a recent token window of size $128$ with a sliding window of $16$ and also keep the first $4$ tokens in full precision. Further, we use uniform quantization with Hadamard rotation for the low-rank part.

\paragraph{KVQuant}
We reimplemented KVQuant~\cite{hooper2024kvquant}, following their original method. As in their work, we quantize keys per-channel and values per-token, quantize keys pre-RoPE, and use their non-uniform quantization using a look-up table of $2^b$ signposts derived via Fisher-sensitivity-weighted k-means. We also implement their dense-and-sparse decomposition, which keeps the top $1\%$ of magnitude outliers in full precision. For fairness, we match the FP16 window budget of the other methods, keeping a recent window of $128$ tokens (flushed in blocks of $32$) and keep the first $4$ tokens in full precision.
\paragraph{AATC} If not stated otherwise, we keep the first $4$ tokens in full precision and set the full precision recent token window to $128$ with a sliding step $16$. The bit range allowed for bit allocation per channel is $0$ to $16$ and the allocation is done globally across all layers for keys and values as described in Section \ref{sec:method-channel-allocation}. Since the first three layers are quite important, but exhibit smaller activations and gradients and therefore smaller weights in the waterfilling problem, we protect them and give them the average amount of bits for the values and $1$ bit more than average for the keys. We always use the same number of bits for keys and values. For the global allocation, we normalize $\sigma^2$ by each layer's own $95$-th percentile to remove the activation-scale bias between the layers, since early layers have smaller activations and thus smaller $\sigma^2$ and we do the same for $w^{(K)}$

\paragraph{AATC var-only} This refers to our method, but with allocating the bits via reverse waterfilling without using $q_c^2$ and $W_{o_c}$ in the weights, therefore only using the variance of the quantization error.

\subsection{Benchmarks}
\label{subsec:benchmarks-quality}
Here is a more detailed explanation of the benchmarks we use in our experiments.
\paragraph{\texttt{LongBench} v1 \cite{bai2024longbench}} is a bilingual, multi-task benchmark for evaluating long-context understanding,
  with document lengths ranging from roughly 1k to 34k tokens. We evaluate on seven English subsets spanning four
  task categories: single-document QA (TriviaQA, Qasper), query-based summarization (QMSum, SAMSum), few-shot
  classification (TREC), and code completion (LCC, RepoBench-P). Following the standard \texttt{LongBench} evaluation
  protocol, we use F1 score for QA tasks, ROUGE-L for summarization, accuracy for classification, and
  edit-similarity for code completion, and report results for each subset individually. We use greedy decoding with $0$-shots and task-dependent maximum new tokens ranging from $ 32$ to $512$ (TriviaQA: $32$, TREC: $64$, Qasper: $128$, SAMSum: $128$, LCC: $64$, RepoBench-P: $64$, QMSum: $512$). 
  \paragraph{\texttt{RULER} \cite{hsieh2024ruler} (What's the Real Context Size of Your Long-Context Language Models?)} also evaluates long context models and tests different context lengths to check if the model still performs well on longer contexts or when its performance degrades. The context lengths are split into  4k, 8k, 16k, 32k, 64k, and 128k, we focus on 4k-32k. It includes 13 tasks across 4 categories (retrieval, multi-hop tracing, aggregation, and question answering) with configurable sequence lengths (4k to 128k tokens). For evaluation, \texttt{RULER} uses task-specific metrics. For Needle-in-a-Haystack (NIAH), those are variable tracking, common word extraction, and frequency word extraction, it checks whether all reference answers are found in the model's response. Then it calculates Score = (samples where all answers are found) / (total samples) × 100. For QA tasks, it checks if any reference answer is found in the model's response and the score is calculated as (samples where at least one answer is found) / (total samples) × 100. For evaluation, we use 500 samples per task, which is a common convention. \texttt{RULER} uses $0$-shot greedy decoding with a maximum of $512$ new tokens.

  \paragraph{\texttt{GSM8K} (Grade School Math 8K) \cite{cobbe2021training}} contains 8,500 linguistically diverse grade school math word problems that require multi-step reasoning. The problems take between 2 and 8 steps to solve and mostly involve basic algebra and arithmetic operations. The solutions are given in natural language. For evaluation, we report the percentage of answers correctly extracted from model outputs using 8-shot chain-of-thought prompting and the flexible-extract metric, which parses the final numerical answer from free-form generated text before comparison. 

\paragraph{\texttt{MMLU-Pro} \cite{wang2024mmlu}} is a challenging multiple-choice benchmark extending MMLU with up to 10 answer
  options and questions requiring deeper reasoning. We evaluate on the math and computer science subjects only,
  using 5-shot prompting with examples drawn from the validation split of the same subject. The model is asked to
  respond with a single letter (A-J). We use greedy decoding with a 32-token budget and extract the predicted answer
  via regex matching on the generated text. Accuracy is reported as the fraction of test questions answered
  correctly across both subjects. \texttt{LongBench} uses greedy decoding, allows up to $32$ new tokens and truncates the input to $4096$ tokens.

  \paragraph{\texttt{MATH-500}} \cite{lightman2023lets} is a 500-problem subset of the MATH competition
  benchmark \cite{hendrycksmath2021}, spanning seven subjects (algebra, number theory, geometry, etc.) and five difficulty levels. We evaluate using 0-shot chain-of-thought prompting, instructing the model to reason step-by-step and place its final
  answer as $\boxed{answer}$. Answers are extracted
  from the last $\boxed{answer}$ expression and graded for symbolic equivalence using math-verify, with a LaTeX
  normalization fallback. We report accuracy across all 500 problems. \texttt{MATH-500} also uses greedy decoding, $2048$ max new tokens.

\subsection{Per-Subtask \texttt{LongBench} Results}
\label{subsec:appendix-longbench}
 
Table \ref{tab:longbench-full} gives the per-subtask \texttt{LongBench} scores underlying the
averages in Table \ref{tab:mainresults}. The breakdown explains PALU's weak average on Qwen.
Its degradation is not uniform but almost entirely localized to the two
code-completion tasks (RepoBench-P $0.34$ vs.\ $0.67$ for FP16, and LCC
$0.48$ vs.\ $0.61$), while summarization is essentially lossless (QMSum $0.234$
vs.\ $0.237$). This is the characteristic signature of low-rank KV compression.
The truncated low-rank components carry the fine-grained, token-level information
that exact-match code completion and long-range retrieval depend on, whereas the
retained dominant subspace suffices for short-context needle retrieval. The same
mechanism accounts for PALU's \texttt{RULER} profile in Table \ref{tab:mainresults}, competitive
at 4k but degrading monotonically with length.

\begin{table*}[p]\centering\scriptsize\setlength{\tabcolsep}{3pt}\renewcommand{\arraystretch}{0.95}\setlength{\aboverulesep}{1pt}\setlength{\belowrulesep}{1pt}
\caption{Per-subtask \texttt{LongBench} scores at the main-table operating point: F1 for QA (TriviaQA, Qasper), ROUGE-L for summarization (SAMSum, QMSum) and query summarization, accuracy for classification (TREC), and edit-similarity for code (LCC, RepoBench-P). \textbf{Bold} = best compressed method per column, and the FP16 baseline is a reference.}
\label{tab:longbench-full}
\begin{tabular}{c l | ccccccc | c}
\toprule
 & Method & TriviaQA & Qasper & TREC & SAMSum & LCC & RepoB-P & QMSum & avg. \\
\midrule
\multirow{5}{*}{\texttt{Llama-3.1-8B-Instruct}} & Baseline & 0.917 & 0.455 & 0.725 & 0.437 & 0.634 & 0.567 & 0.254 & 0.570 \\
 & KIVI & 0.913 & 0.440 & \textbf{0.725} & \textbf{0.448} & 0.619 & 0.552 & 0.250 & 0.564 \\
 & PALU &  0.892   &  0.449  & 0.710  &       0.438   &  0.539  &  0.576  & 0.248 & 0.550 \\
 & KVQuant & 0.921 & 0.413 & 0.675 & 0.415 & 0.633 & 0.546 & 0.241 & 0.549 \\
  & AATC var & \textbf{0.922} & \textbf{0.452} & 0.715 & 0.430 & \textbf{0.634} & \textbf{0.559} & 0.251 & \textbf{0.566} \\
 & AATC (ours) & 0.919 & 0.444 & 0.690 & 0.434 & 0.623 & 0.546 & \textbf{0.252} & 0.558 \\
\midrule
\multirow{6}{*}{\texttt{Qwen-2.5-7B-Instruct}} & Baseline & 0.900 & 0.434 & 0.715 & 0.460 & 0.606 & 0.668 & 0.237 & 0.574 \\
 & KIVI & 0.597 & 0.187 & 0.568 & 0.337 & 0.283 & 0.316 & 0.168 & 0.351 \\
 & PALU & 0.876 & 0.367 & 0.650 & 0.440 & 0.477 & 0.342 & \textbf{0.234} & 0.484 \\
 & KVQuant & 0.900 & 0.409 & \textbf{0.700} & 0.449 & 0.583 & 0.640 & 0.226 & 0.558 \\
  & AATC var & 0.896 & 0.425 & 0.675 & \textbf{0.461} & 0.569 & \textbf{0.648} & 0.233 & 0.558 \\
 & AATC (ours) & \textbf{0.912} & \textbf{0.435} & \textbf{0.700} & 0.452 & \textbf{0.609} & 0.631 & 0.232 & \textbf{0.567} \\
\bottomrule\end{tabular}
\end{table*}

\begin{table*}[p]\centering\footnotesize\setlength{\tabcolsep}{4pt}\renewcommand{\arraystretch}{0.95}\setlength{\aboverulesep}{1pt}\setlength{\belowrulesep}{1pt}
\caption{AATC at the aggressive 2-bit operating point ($\approx$7$\times$ compression), against the FP16 reference. Baseline compression methods (KIVI, PALU, KVQuant), which run at 2-bit and $\approx$5 to 6$\times$, are reported in the main results table (\ref{tab:mainresults}). }
\label{tab:2bit-supp}
\begin{tabular}{c l | c | c | cccc | cccc}
\toprule
 & & & \texttt{LongBench} & \multicolumn{4}{c|}{RULER} & \multicolumn{4}{c}{\textbf{Reasoning}} \\
 & Method & Comp. & avg. & 4k & 8k & 16k & 32k & \texttt{GSM8K} & MMLU-m & MMLU-cs & MATH \\
\midrule
\multirow{3}{*}{Llama} & Baseline (FP16) & 1.00 & 0.570 & 0.859 & 0.856 & 0.828 & 0.775 & 0.789 & 0.438 & 0.456 & 0.454 \\
 & AATC var & 7.0 & 0.543 & 0.818 & 0.822 & 0.775 & 0.677 & \textbf{0.767} & 0.402 & 0.427 & \textbf{0.484} \\
 & AATC (ours) & 7.0 & \textbf{0.568} & \textbf{0.851} & \textbf{0.861} & \textbf{0.809} & \textbf{0.735} & 0.756 & \textbf{0.416} & \textbf{0.434} & 0.448 \\
\midrule
\multirow{3}{*}{Qwen} & Baseline (FP16) & 1.00 & 0.574 & 0.720 & 0.716 & 0.718 & 0.720 & 0.818 & 0.708 & 0.567 & 0.680 \\
 & AATC var & 6.9 & 0.518 & 0.688 & 0.653 & 0.640 & 0.566 & 0.795 & 0.644 & 0.510 & 0.664 \\
 & AATC (ours) & 6.9 & \textbf{0.555} & \textbf{0.703} & \textbf{0.702} & \textbf{0.704} & \textbf{0.669} & \textbf{0.802} & \textbf{0.658} & \textbf{0.542} & \textbf{0.676} \\
\bottomrule\end{tabular}
\end{table*}
 
\subsection{Robustness across different Bit Budgets}
\label{subsec:appendix-bitsweep}
 
Table \ref{tab:bitsweep-appendix} and Figure \ref{fig:bitsweep} sweep AATC across KV bit averages from 4-bit down to 1-bit. AATC degrades gracefully. The \texttt{LongBench}
average remains within the noise of FP16 from 4-bit down to 2-bit, and \texttt{GSM8K} declines
smoothly, with a pronounced drop appearing only at 1.5-bit and below. At the
extreme 1-bit setting, AATC remains substantially more robust than KIVI (\texttt{LongBench}
$0.33$ vs.\ $0.20$, \texttt{GSM8K} $0.50$ vs.\ $0.03$ on Llama), indicating that the
transform-and-allocation design fails far more gracefully than fixed-precision
quantization once the bit budget is exhausted.

Table \ref{tab:2bit-supp} isolates AATC's 2-bit operating point ($\approx\!7\times$)
against the FP16 reference. The 2-bit quantization baselines (KIVI and KVQuant), which run at
$\approx\!5$ to $6\times$, are reported in Table \ref{tab:mainresults}. AATC stays close
to FP16 on Qwen across all context lengths, while AATC var-only falls off on
long-context \texttt{RULER}, consistent with Figure \ref{fig:divergence}.

\begin{figure}
    \centering
    \includegraphics[width=1.0\linewidth]{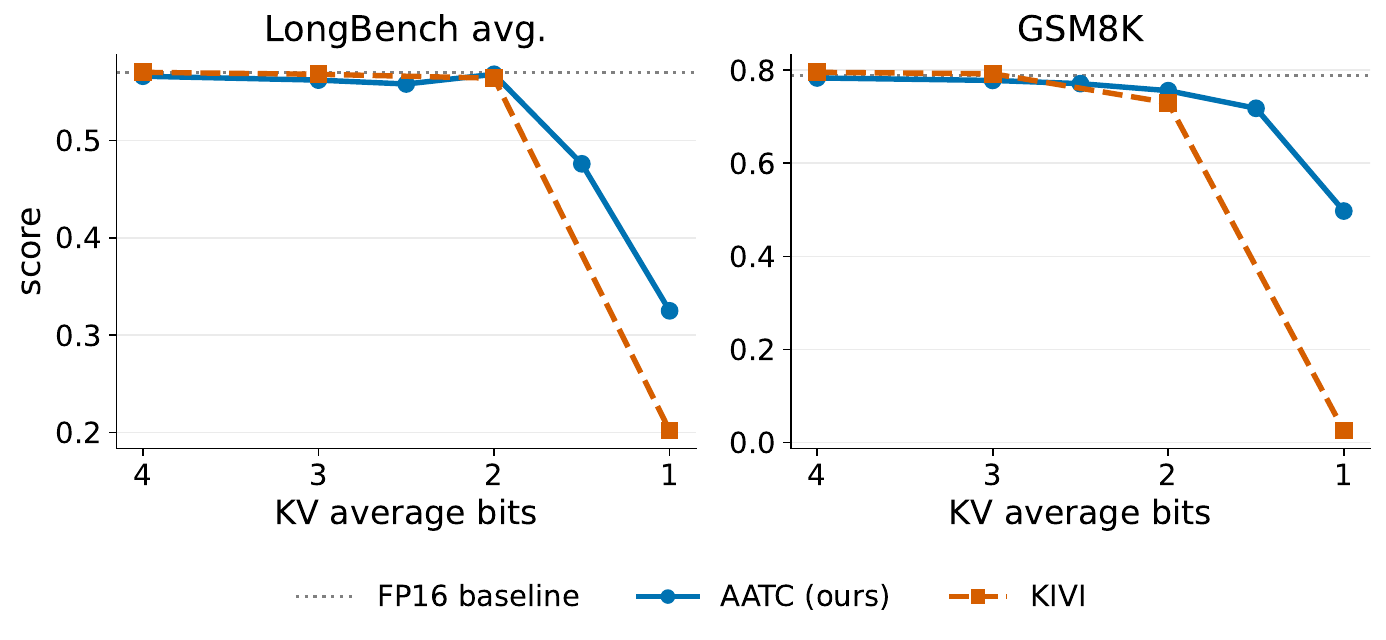}
    \caption{\texttt{LongBench} average and \texttt{GSM8K} versus KV averages bits on \texttt{Llama-3.1-8B-Instruct}.}
    \label{fig:bitsweep}
\end{figure}

\begin{table*}[p]\centering\scriptsize\setlength{\tabcolsep}{3pt}\renewcommand{\arraystretch}{0.95}\setlength{\aboverulesep}{1pt}\setlength{\belowrulesep}{1pt}
\caption{\texttt{LongBench} per-subtask scores and \texttt{GSM8K} across KV multiple bit-budgets. Metrics: F1 (TriviaQA, Qasper), accuracy (TREC), ROUGE-L (SAMSum, QMSum), edit-similarity (LCC, RepoBench-P). LB avg.\ is the seven-task mean.}
\label{tab:bitsweep-appendix}
\begin{tabular}{c l | c| ccccccc | c | c}
\toprule
 & Config &Comp & TriviaQA & Qasper & TREC & SAMSum & LCC & RepoB-P & QMSum & LB avg. & \texttt{GSM8K} \\
\midrule
\multirow{15}{*}{Llama-3.1-8B--Instruct} & Baseline FP16 & 1.0 &0.917 & 0.455 & 0.725 & 0.437 & 0.634 & 0.567 & 0.254 & 0.570 & 0.789\\
 & KIVI 4-bit & 3.1&0.917 & 0.454 & 0.725 & 0.440 & 0.634 & 0.568 & 0.250 & 0.570 & 0.795 \\
 & KVQuant 4-bit & 3.4& 0.914 & 0.447 & 0.725 & 0.426 & 0.631 & 0.560 & 0.256 & 0.566 & 0.779 \\
 & AATC 4-bit &3.8 &0.916 & 0.446 & 0.725 & 0.436 & 0.624 & 0.565 & 0.253 & 0.566 & 0.783 \\
 & KIVI 3-bit & 3.8 &0.914 & 0.450 & 0.725 & 0.435 & 0.634 & 0.564 & 0.254 & 0.568 & 0.792 \\
 & KVQuant 3-bit & 4.3& 0.921 & 0.452 & 0.725 & 0.433 & 0.631 & 0.559 & 0.250 & 0.567 & 0.776\\
 & AATC 3-bit & 4.9 &0.920 & 0.458 & 0.705 & 0.439 & 0.613 & 0.549 & 0.251 & 0.562 & 0.778 \\
 & AATC-var 2-bit & 7.0 & 0.897 & 0.435 & 0.650 & 0.436 & 0.619 & 0.519 & 0.243 & 0.543 & 0.767 \\
 & AATC 2-bit & 7.0 &0.917 & 0.456 & 0.720 & 0.438 & 0.630 & 0.570 & 0.248 & 0.568 & 0.756 \\
 & AATC 1.5-bit &9.1 & 0.874 & 0.336 & 0.630 & 0.434 & 0.480 & 0.358 & 0.223 & 0.476 & 0.718 \\
 & KIVI 1-bit & 7.1 &0.256 & 0.059 & 0.368 & 0.114 & 0.257 & 0.241 & 0.122 & 0.202 & 0.026 \\
 & AATC 1-bit &12.4 &0.614 & 0.111 & 0.265 & 0.289 & 0.445 & 0.375 & 0.174 & 0.325 & 0.497 \\
\midrule
\multirow{13}{*}{Qwen-2.5-7B--Instruct} & Baseline FP16 &1.0 &0.900 & 0.434 & 0.715 & 0.460 & 0.606 & 0.668 & 0.237 & 0.574 & 0.818 \\
 & KIVI 4-bit &3.1 &0.734 & 0.291 & 0.665 & 0.399 & 0.451 & 0.518 & 0.195 & 0.465 & 0.131 \\
 & KVQuant 4-bit &3.4 &0.897 & 0.428 & 0.710 & 0.465 & 0.605 & 0.667 & 0.236 & 0.573 & 0.820 \\
 & AATC-var 4-bit & 3.8&0.891 & 0.427 & 0.735 & 0.459 & 0.604 & 0.661 & 0.238 & 0.574 & 0.820 \\
 & AATC 4-bit &3.8 &0.894 & 0.434 & 0.700 & 0.459 & 0.612 & 0.664 & 0.232 & 0.571 & 0.825 \\
 & KIVI 3-bit & 3.8 &0.686 & 0.233 & 0.620 & 0.374 & 0.334 & 0.380 & 0.188 & 0.402 & 0.182 \\
 & KVQuant 3-bit &4.3 &0.921 & 0.452 & 0.725 & 0.433 & 0.631 & 0.558 & 0.250 & 0.567 & 0.816 \\
 & AATC-var 3-bit &4.9 &0.907 & 0.424 & 0.705 & 0.464 & 0.596 & 0.652 & 0.233 & 0.569 & 0.832 \\
 & AATC 3-bit &4.9 &0.888 & 0.433 & 0.705 & 0.457 & 0.599 & 0.667 & 0.238 & 0.570 & 0.818 \\
 & AATC-var 2-bit &6.9 &0.871 & 0.365 & 0.620 & 0.445 & 0.500 & 0.603 & 0.222 & 0.518 & 0.795 \\
 & AATC 2-bit &6.9 &0.898 & 0.386 & 0.705 & 0.456 & 0.597 & 0.612 & 0.230 & 0.555 & 0.802 \\
\bottomrule\end{tabular}
\end{table*}

\begin{table*}[p]\centering\footnotesize\setlength{\tabcolsep}{4pt}\renewcommand{\arraystretch}{0.95}\setlength{\aboverulesep}{1pt}\setlength{\belowrulesep}{1.0pt}
\caption{Ablation of AATC's bit-allocation and calibration choices (2-bit, \texttt{Llama-3.1-8B-Instruct}, with \texttt{LongBench} per-subtask, LB avg.\ = seven-task mean and \texttt{GSM8K}). }
\label{tab:longbench-AATC-calib-ablation}
\begin{tabular}{l|ccccccc|c|c}
\toprule
AATC variant (2-bit, \texttt{Llama-3.1-8B-Instruct}) & TriviaQA & Qasper & TREC & SAMSum & LCC & RepoB-P & QMSum & LB avg. & \texttt{GSM8K} \\
\midrule
AATC 2-bit (standard) & \textbf{0.917} & \textbf{0.456} & \textbf{0.720} & 0.438 & \textbf{0.630} & \textbf{0.570} & 0.248 & \textbf{0.568} & 0.751 \\
\multicolumn{10}{l}{\textit{Allocation}} \\
\quad no $q_c^2$ in allocation & 0.910 & 0.410 & 0.685 & 0.444 & 0.545 & 0.473 & 0.248 & 0.531 & 0.752 \\
\quad no $W_O$ in allocation & 0.916 & 0.452 & \textbf{0.720} & 0.430 & 0.602 & 0.551 & \textbf{0.251} & 0.560 & \textbf{0.763} \\
\quad per-layer allocation & 0.913 & 0.430 & 0.715 & 0.445 & 0.583 & 0.533 & 0.249 & 0.553 & 0.760 \\
\quad global allocation & 0.847 & 0.337 & 0.595 & 0.432 & 0.475 & 0.462 & 0.231 & 0.483 & 0.740 \\
\multicolumn{10}{l}{\textit{Calibration set}} \\
\quad calib.\ OpenR1-Math only & 0.915 & 0.449 & 0.710 & \textbf{0.454} & 0.600 & 0.540 & 0.248 & 0.559 & 0.761 \\
\quad calib.\ FineWeb only & 0.895 & 0.437 & 0.690 & 0.446 & 0.564 & 0.540 & 0.248 & 0.546 & 0.754 \\
\bottomrule\end{tabular}
\end{table*}
 
\subsection{Ablation: Allocation and Calibration}
\label{subsec:appendix-ablation}
 
Table \ref{tab:longbench-AATC-calib-ablation} ablates the two design axes of AATC at
2-bit on Llama. The bit-allocation strategy is the dominant factor. Replacing the
adaptive allocation with a single global allocation costs $8.6$ \texttt{LongBench} points
($0.568$ to $0.483$), and removing the query-norm term ($q_c^2$) costs $3.8$
points. In contrast, omitting
the output-projection ($W_O$) term ($-0.8$) and using a per-layer rather than
per-channel allocation ($-1.6$) does not drop performance significantly. For the calibration-corpus we cannot see a significant drop for only using OpenR1-Math $-0.9$, for FineWeb the drop is slightly larger, which one could interpret as reasoning text being important for calibration. \texttt{GSM8K} is
not much affected by any variants, so the ablation is
resolved on \texttt{LongBench}. Consistent with Section \ref{subsec:appendix-longbench}, the
damage from the significant ablations concentrates on the precision-sensitive code
and retrieval subtasks (global allocation drops LCC $0.63$ to $0.47$ and
RepoBench-P $0.57$ to $0.46$).

\newpage

\bibliographystyle{IEEEtran}
\bibliography{sample}

\end{document}